\PassOptionsToPackage{table}{xcolor}
\documentclass[arxiv]{meowreport}
\usepackage{xcolor}
\usepackage{soul}

\sethlcolor{cyan!30}
\usepackage{tcolorbox}
\tcbuselibrary{skins,breakable}

\usepackage{comment}    % \begin{comment} blocks inside the section files
\usepackage{wrapfig}    % wraptable floats in the experiment section
\usepackage{listings}   % VLM prompt listings in the appendix
\usepackage{amsthm}     % definition / proposition / proof in the appendix

\newtheorem{definition}{Definition}
\newtheorem{proposition}{Proposition}

\renewcommand{\ie}{i.e.\ }

\newcommand{\ML}[1]{\textcolor{purple!80!black}{\textbf{[ML:} #1\textbf{]}}}

\newcommand{\C}[1]{\textcolor{blue!80!black}{\textbf{[Chi:} #1\textbf{]}}}

\colorlet{punct}{red!60!black}
\definecolor{background}{HTML}{EEEEEE}
\definecolor{delim}{RGB}{20,105,176}
\colorlet{numb}{magenta!60!black}
\lstdefinelanguage{json}{
    basicstyle=\normalfont\ttfamily,
    numbers=left,
    numberstyle=\scriptsize,
    stepnumber=1,
    numbersep=8pt,
    showstringspaces=false,
    breaklines=true,
    frame=lines,
    backgroundcolor=\color{background},
    literate=
     *{0}{{{\color{numb}0}}}{1}
      {1}{{{\color{numb}1}}}{1}
      {2}{{{\color{numb}2}}}{1}
      {3}{{{\color{numb}3}}}{1}
      {4}{{{\color{numb}4}}}{1}
      {5}{{{\color{numb}5}}}{1}
      {6}{{{\color{numb}6}}}{1}
      {7}{{{\color{numb}7}}}{1}
      {8}{{{\color{numb}8}}}{1}
      {9}{{{\color{numb}9}}}{1}
      {:}{{{\color{punct}{:}}}}{1}
      {,}{{{\color{punct}{,}}}}{1}
      {\{}{{{\color{delim}{\{}}}}{1}
      {\}}{{{\color{delim}{\}}}}}{1}
      {[}{{{\color{delim}{[}}}}{1}
      {]}{{{\color{delim}{]}}}}{1},
}
\renewenvironment{abstract}{\begin{meowabstract}}{\end{meowabstract}}

\meowtitle{ElasticTTT: Prior-Preserving Test-Time Tuning for Video Editing}
\meowauthors{  Yueyi Liu$^{1*}$ \quad
    Chi Zhang$^{1*}$ \quad
    Sen Cui$^{2}$ \quad
    Miao Liu$^{1\dagger}$}
\meowaffiliation{College of AI, Tsinghua University$^{1}$,

Beijing Academy of Artificial Intelligence$^{2}$

}
\meowdate{\today}
\meowversion{Technical report}
 \meowlinks{
   \meowlink{Project Page}{https://liuyueyi-del.github.io/ElasticTTTprojectpage/}
   \quad
   \meowlink{Code}{https://github.com/liuyueyi-del/ElasticTTT-for-video-editing}
   \quad
   \meowlink{Dataset}{https://huggingface.co/datasets/liuyueyi-8/ElasticTTT-video-editing-dataset}
}

\begin{document}

\makemeowtitle
\renewcommand{\thefootnote}{}
\footnotetext{$^*$ Equal contribution}
\footnotetext{$^\dagger$ Corresponding author}
\renewcommand{\thefootnote}{\arabic{footnote}}
\begin{abstract}

Test-Time Tuning (TTT) on pretrained diffusion models has emerged as a powerful paradigm for video editing. However, there exists a foundational mismatch between the distribution-mapping nature of generative models and the single-point optimization of standard TTT. In this paper, we demonstrate that this mismatch triggers \textit{Prior Collapse}, a degenerate state where the model discards the text conditions and spatial latents, collapsing generations to the source video, or entangling the features of distinct regions. To resolve this, we propose \textbf{ElasticTTT}, a novel framework that preserves the prior generative distribution and rescues generative elasticity. Specifically, we propose \textit{Target Distribution Regularization} to prevent sharp memorization minima, \textit{Contrastive CFG} to guide inference away from source biases, and \textit{Asynchronous Noise Schedule} to preserve unedited regions. Extensive evaluations, supported by theoretical analysis, demonstrate that ElasticTTT successfully preserves the generative prior of the base model, achieving state-of-the-art performance on one-shot video editing.

\end{abstract}
\section{Introduction}

Advancements in denoising diffusion models~\cite{ho2020denoising, song2020score} have enabled significant progress in text-to-video generation~\cite{kong2024hunyuanvideo,wan2025wan,hong2022cogvideo,yang2025cogvideox}. These powerful video generation backbones also open new opportunities for controllable video editing by leveraging learned generative priors~\cite{DBLP:journals/corr/abs-2503-11412,mou2024revideo,huang2025dive}. However, user-provided source videos may fall outside the fixed training distribution of off-the-shelf models~\cite{yu2025veggie,bai2025ditto, zi2025se, wu2025insvie, he2025openve,li2025qffusion, bian2025videopainteranylengthvideoinpainting,zhang2025voicebridge}, particularly given the rapid emergence of new concepts across the internet, resulting in degraded editing performance. Recently, another line of research has explored Test-Time Tuning (TTT)~\cite{sun2020test, wang2020tent,wang2025low,wu2023tune, huang2025dive, Harsha_2024_CVPR, wang2025t3v2v}, which aims to adapt the weights of off-the-shelf models to the domain of user-provided source videos.

% , allowing more consistent and faithful video editing performance without any large scale training cost.

Compared with video editing approaches that perform inference directly with pre-trained models, TTT is particularly well suited to challenging scenarios where the source video exhibits a large domain gap from the pretraining distribution. This is because TTT explicitly adapts the model to the specific content and dynamics of each source video, thereby improving the preservation of its appearance, structure, and motion. Besides serving as a practical paradigm for video editing, TTT-based methods also function an effective data engine for synthesizing paired editing data to train large-scale feed-forward video editing models~\cite{wu2025insvie,zi2026senorita}.

\begin{figure}[t]
    \centering
    \includegraphics[width=1.0\linewidth]{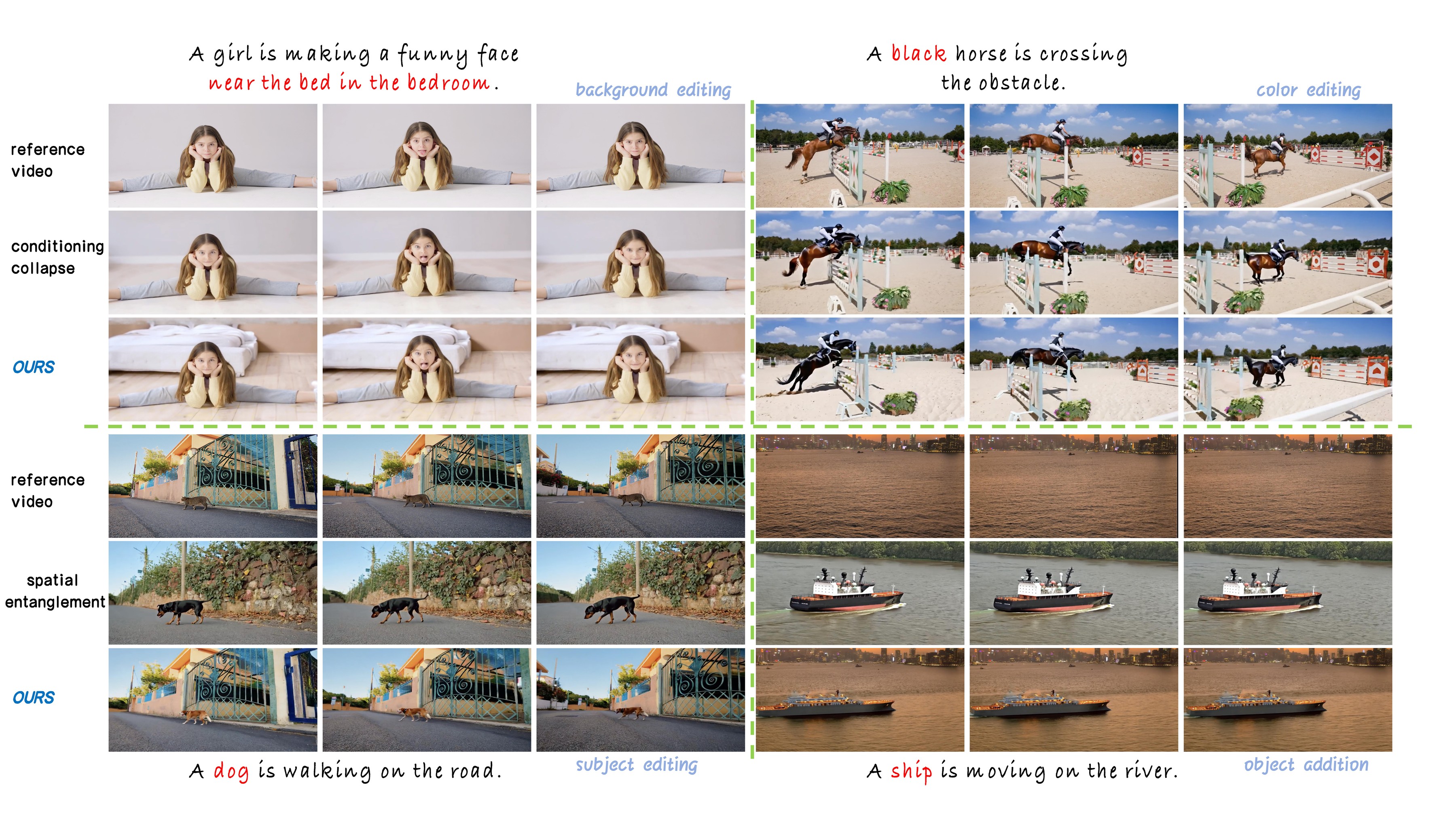}
    \vspace{-8mm}
    \caption{Demonstration of ElasticTTT and the baseline TTT method on four different editing tasks. ElasticTTT obtains high quality editing results among all tasks, whereas the baseline method exhibits clear Conditioning Collapse and Spatial Entanglement.}
    \label{fig:demo}
    \vspace{-3mm}
\end{figure}

Although TTT provides a natural mechanism for personalized user experiences~\cite{ruiz2023dreambooth, bi2025customttt, gal2022image, kumari2023multi,wu2026pipbench}, it introduces an inherent tension with the diffusion process: TTT intentionally encourages instance-specific overfitting, while diffusion processes are inherently stochastic and intended to preserve distributional diversity. Notably, optimizing a highly parameterized diffusion model exclusively on this zero-variance singularity can easily lead to a failure mode that we term \emph{prior collapse}. As illustrated in~\cref{fig:demo}, prior collapse manifests itself in two complementary failure phenomena. \emph{Conditioning Collapse} arises when the conditional guidance becomes anchored to the source-conditioned dynamics during tuning, preventing effective semantic control and causing the model to disregard the target editing instruction. \emph{Spatial Entanglement} arises when spatial representations become globally coupled during denoising, preventing localized control and causing unintended modifications outside the target region.

To circumvent these challenges, previous methods have proposed a spectrum of interventions. To mitigate Conditioning Collapse, prior works often rely on low-rank adaptation (LoRA)~\cite{gao2025lora}, prior-preservation losses~\cite{ruiz2023dreambooth}, self-supervised objectives~\cite{wang2025low}, or the isolated optimization of text and null-text embeddings~\cite{gal2022image, mokady2023null}. To resolve Spatial Entanglement, methods typically resort to invasive attention map modifications~\cite{kwon2024unified,zuo2025cut}, masked optimization phases~\cite{gao2025lora}, or the injection of external control signals like depth and optical flow~\cite{wang2025t3v2v}.
However, these approaches are fundamentally sub-optimal, their complexity determines that they heavily rely on delicate architectural tuning, require auxiliary datasets, or remain dangerously sensitive to hyperparameter choices.

More importantly, while Test-Time Tuning (TTT) has shown immense promise in image editing by fine-tuning models directly on the source image, adapting TTT to video is remarkably scarce and fraught with difficulty. Video Diffusion Transformers~\cite{wan2025wan,peebles2023scalable} possess highly complicated spatial-temporal priors, and naively transferring image-based TTT optimization techniques to video models severely accelerates Prior Collapse. The network quickly overfits to the deterministic motion and appearance of the source video, losing its generative elasticity required for precise, accurate editing.

To resolve this fundamental optimization–sampling tension, we propose \textbf{ElasticTTT}, a principled framework designed to rescue generative elasticity and explicitly cure the symptoms of Prior Collapse. Rather than treating these failures as isolated artifacts, ElasticTTT intervenes across the entire generative pipeline: First, to prevent Conditioning Collapse during optimization, we introduce \emph{Target Distribution Regularization}. By softly relaxing the rigid, single-point target with controlled stochasticity, we physically prevent the model from memorizing the source video, preserving its elastic denoising priors. However, as the tuned weights still exhibit a residual shift toward the source concept, we subsequently apply \emph{Contrastive Classifier-Free Guidance} during inference to forcefully steer the trajectory toward the target condition. Finally, to resolve Spatial Entanglement, we introduce an Asynchronous Noise Schedule in the sampling process, which dynamically assign independent diffusion trajectories to the edited and anchored regions, clarifying latent representations and granting the model the localized freedom to render new concepts while safely preserving the unedited surroundings.
% \ML{please check, still sound  abit piling things up}

To validate the effectiveness of our framework, we conduct extensive experiments across a diverse range of editing tasks. Our results demonstrate that ElasticTTT successfully resolves the prior collapse issue, exhibiting robust generative elasticity, precise instruction adherence, and high-fidelity preservation of the source video. In both quantitative and qualitative evaluations, ElasticTTT consistently outperforms competitive baselines—spanning inference-only, test-time-tuning, and training-based approaches—ultimately achieving state-of-the-art performance in video editing.

\section{Related work}
\subsection{Test-Time Tuning of Diffusion Models}

Test-Time Tuning (TTT)~\cite{sun2020test, wang2020tent} updates a pre-trained diffusion model at inference time for tasks like customized and personalized generation~\cite{ruiz2023dreambooth, kumari2023multi, gal2022image, bi2025customttt}, stylized generation~\cite{zhang2023inversion, chung2024style, sohn2023styledrop, wang2023stylediffusion}, and test-time scaling~\cite{ma2025inference}.  For generative editing, TTT establishes a high-fidelity anchor by reconstructing a specific reference video~\cite{wu2023tune} for subsequent modifications, but optimizing the model on a single reference naively often degenerates performance~\cite{ruiz2023dreambooth}. Previous works attempt to mitigate this issue via low-rank adaptation~\cite{gao2025lora, gal2022image}, additional prior-preserving losses~\cite{ruiz2023dreambooth}, only optimizing text embedding~\cite{gal2022image} or null-text embeddings~\cite{mokady2023null}, or introducing self-supervised signals~\cite{wang2025low}. However, these interventions do not fully resolve the issue, especially on highly parameterized text-to-video models~\cite{wan2025wan}, often resulting in a zero-sum compromise between following novel editing instructions and preserving details of the source video.

\subsection{Video Editing}
Video editing requires seamlessly integrating new semantic concepts or modifying existing features while preserving the source video's unedited structures. Existing methods generally fall into three paradigms. \textit{Training-based methods}~\cite{yu2025veggie,bian2025videopainteranylengthvideoinpainting,DBLP:journals/corr/abs-2503-11412,li2025qffusion, wu2025insvie, he2025openve, bai2025ditto, zi2025se} fine-tune video generation models on paired datasets, but suffer from severe data scarcity and limited generalization on new concepts, and heavy computational resources needed for training. \textit{Training-free methods}~\cite{kulikov2025flowedit,cohen2024slicedit,DBLP:journals/corr/abs-2403-16111, ku2024anyvv, geyer2024tokenflow, jeong2024groundavideo, bai2025uniedit,liu2025stablev2v} bypass data requirements by leveraging pre-trained priors via inverse diffusion methods~\cite{song2020denoising, meng2021sdedit}, often including manipulating attention maps~\cite{meral2024motionflow,liu2025stablev2v,kwon2024unified} or utilizing pretrained image editing models~\cite{DBLP:journals/corr/abs-2405-16537,gao2025lora,ku2024anyvv,liu2025stablev2v}, but inherently struggle to balance semantic alignment and reference preservation due to fixed model parameters and sampling trajectory length. \textit{Test-Time Tuning (TTT) methods}~\cite{wu2023tune,huang2025dive,Harsha_2024_CVPR,liu2024video} bridge this gap by tuning the base model directly on the source video, improving source reconstruction without involving external datasets. However, adopting TTT for video editing directly encounters the prior collapse issue, to which we offer a systematic solution in our work. More details of the related works are shown in Supplementary.

\begin{figure}[t]
    \centering
    \includegraphics[width=1\textwidth]{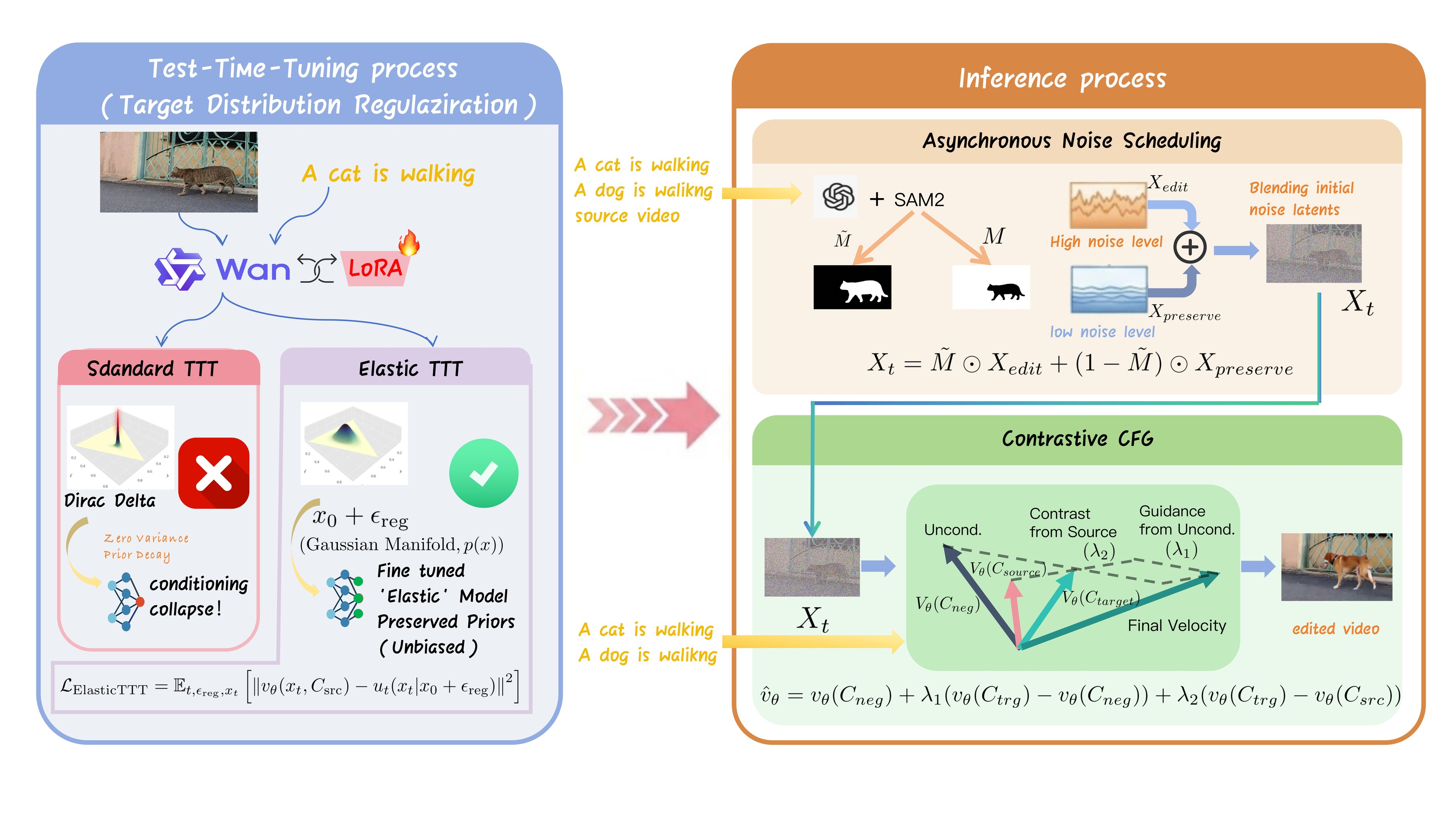}
    \vspace{-9mm}
    \caption{\textbf{The overall pipeline of ElasticTTT}. To resolve Conditioning Collapse, we soften the optimization target via TDR during training, and repel the sampling process from the source distribution via Contrastive CFG during sampling. To mitigate Spatial Entanglement, we perform Asynchronous Noise Scheduling to establish independent integration trajectories for distinct regions.}
    \label{fig:pipeline}
    \vspace{-3mm}
\end{figure}

\section{Method}

\label{sec:overview}

Given a source video $V_{\text{src}}$ and its corresponding descriptive text prompt $C_{\text{src}}$, the goal of video editing is to generate a target video $V_{\text{trg}}$ that aligns with a user-provided target prompt $C_{\text{trg}}$ while preserving the fundamental structural and temporal dynamics of $V_{\text{src}}$. Standard video editing models sample from a fixed conditional distribution 
$p_{\theta}(V_{\text{trg}}\mid V_{\text{src}}, C_{\text{trg}})$ with frozen parameters $\theta$, 
whereas test-time tuning approaches first adapt the model parameters on the source input via 
$\theta' = \mathcal{A}(\theta; V_{\text{src}}, C_{\text{src}})$, 
and then sample from the instance-adapted distribution 
$p_{\theta'}(V_{\text{trg}} \mid V_{\text{src}}, C_{\text{trg}})$, 
allowing stronger identity preservation and scene-specific consistency under the desired editing prompt.

Concretely, in the first stage, the model operates in a latent space and is optimized to reconstruct the source video latent $x_0$ by minimizing the MSE loss between the predicted velocity $v_\theta$ and the target conditional vector field $u_t$ given the noised latent $x_t$ at timestep $t$:
\begin{align}
    \mathcal{L}_{\text{TTT}} = \mathbb{E}_{t, x_t} \left[ \lVert v_\theta(x_t, t, C_{\text{src}}) - u_t(x_t \mid x_0) \rVert^2 \right]
\end{align}
Once the model is fine-tuned, inference is performed starting from a noisy latent, acquired by DDIM inversion~\cite{song2020denoising}, utilizing the target prompt $C_{\text{trg}}$ to guide the generation of $V_{\text{trg}}$.

\subsection{The Optimization–Sampling Tension}
While standard TTT captures the spatial-temporal features of the reference video, it introduces a fundamental mismatch with the generative nature of diffusion models.
Diffusion models learn mappings between data distributions and rely on diverse samples to preserve generative flexibility. In TTT, however, the distribution of training data collapses to a single instance, effectively degenerating into a Dirac delta distribution, i.e., $p(x)=\delta(x-x_0)$. 

Optimizing a highly parameterized diffusion model on such a single-point distribution can easily lead to a failure mode that we term \textbf{\textit{prior collapse}}: The model overfits to high-frequency pixel-level details of the source video, prioritizing exact reconstruction over generalizable condition-aware denoising. We conducted an illustrative experiment to examine this phenomenon, as shown in~\cref{fig:toy}: The base diffusion model is trained on a distribution with 25 classes (A), and TTT is performed on one sample of the center class. When we conduct inference with other class conditions, the model simply bypasses the conditions and generates a chaotic cluster around the memorized target distribution (B).

Formally, in video diffusion models, the network predicts the velocity field $v_\theta(x_t, t, C_\text{src})$ based on the noisy latent $x_t$ and the text condition $C_\text{src}$. Although the large pretrained model does not fully collapse the prior distribution, it still progressively learns to bypass meaningful variations in $x_t$ and $C_\text{src}$, instead hard-coding the high-frequency details of the source video into its parameters, degrading editing performance across two modalities:

\begin{itemize}
    \item \textbf{Conditioning Collapse} By repeatedly updating on a static text prompt $C_\text{src}$, the model gradually weakens the text condition pathways. This causes the model to ignore the target prompt $C_\text{trg}$ during inference, losing its semantic elasticity, anchoring to the source concept rather than following new textual guidance (\cref{fig:demo} first row).
    \item \textbf{Spatial Entanglement} By increasingly bypassing the noisy input $x_t$ to memorize the global target, the model degrades its ability to extract semantically meaningful spatial representations from $x_t$. Consequently, distinct semantic objects lose their boundaries, causing severe visual corruption and entanglement when attempting to edit specific regions (\cref{fig:demo} second row).
\end{itemize}
\begin{figure}[tbp]
    \centering
    \vspace{-3mm}
    \includegraphics[width=1\linewidth]{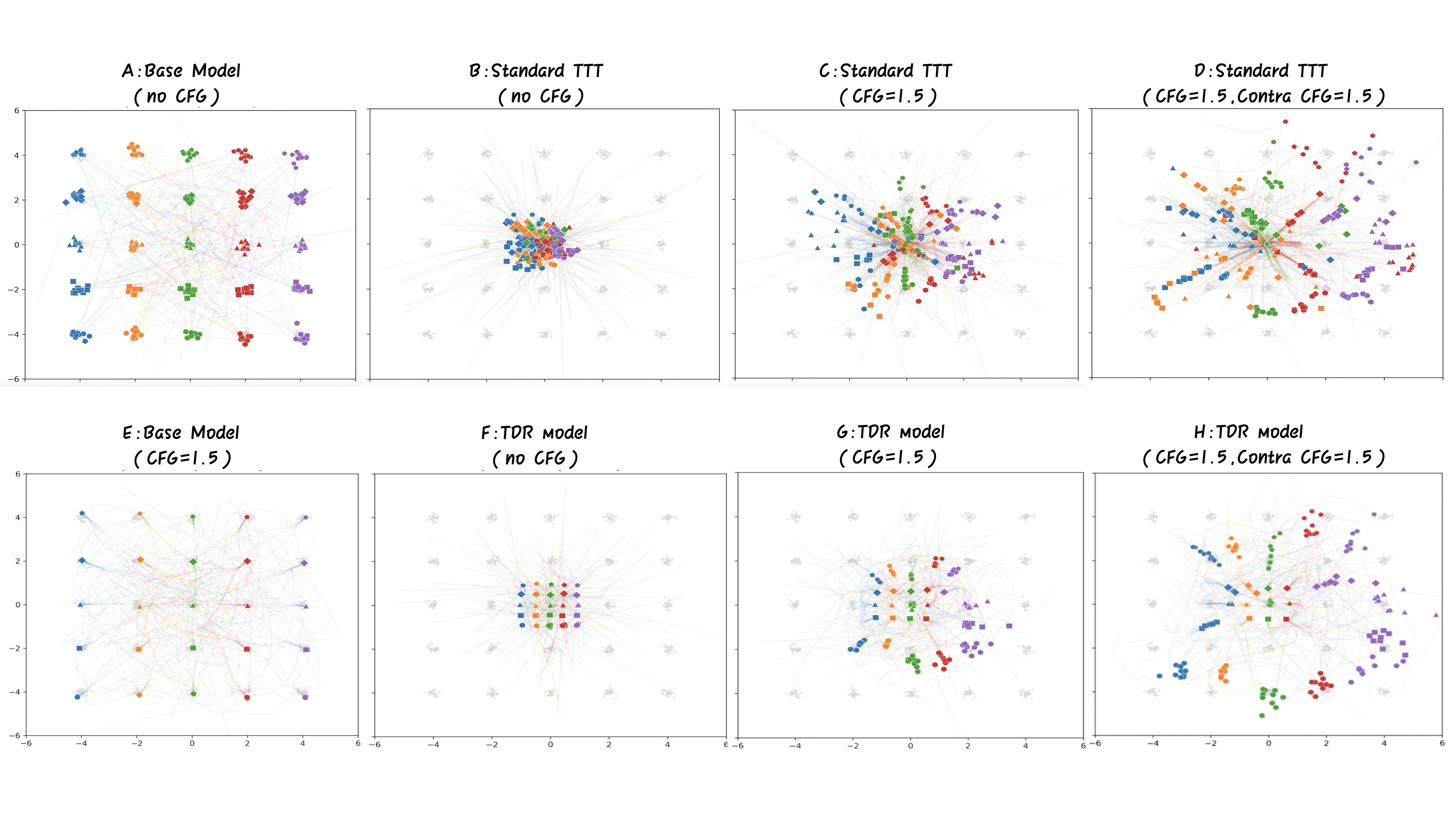}
    \vspace{-8mm}
    \caption{A 2D toy experiment, illustrating the effect of TDR and Contrastive CFG. In standard TTT, the generated distribution of the base model (A) collapses around the optimization target (B), and dispersing to chaos when CFG is applied (C, D). With TDR, the generated distribution remains its original structure (F), and Contrastive CFG further cancels out the introduced distribution shift (G, H).}
    \label{fig:toy}
    \vspace{-5mm}
\end{figure}
Importantly, prior collapse is not equivalent to overfitting: TTT intentionally encourages controlled overfitting to capture source characteristics, while prior collapse represents a degenerate regime in which generative behavior collapses and editing controllability is lost. 

To resolve this foundational mismatch, we propose \textbf{ElasticTTT}, a novel pipeline that preserves the generative prior. We mitigate Conditioning Collapse during training by softening the optimization target into a continuous local distribution~\cref{subsec:TDR}, and resolve the remaining conditioning bias induced through repelling the generation from the source condition through a contrastive classifier-free-guidance~\cref{subsec:contrastivecfg}. We resolve Spatial Entanglement during inference by dynamically processing different regions with asynchronous noise levels~\cref{subsec:asyncns}. The following sections detail the formulations of these three methods.

\subsection{Target Distribution Regularization: Escaping the Dirac Singularity}
\label{subsec:TDR}
The root cause of Prior Collapse of the standard TTT lies in the rigid and deterministic nature of the Dirac delta distribution $p_{data}(x) = \delta(x - x_0)$ of the target data. To prevent the generative prior from collapsing into this sharp singularity, we introduce \textit{Target Distribution Regularization} (TDR).

The core insight of TDR is to inject controlled stochasticity \textit{strictly} into the optimization target, while keeping the input trajectory completely clean. Specifically, given a clean source video $x_0$ and a standard Gaussian noise sample $x_1 \sim \mathcal{N}(0, I)$, we construct the intermediate noisy latent $x_t$ exactly as in standard flow matching $x_t = tx_1 + (1-t)x_0$. 
% However, when computing the target velocity that the network must predict, we construct a perturbed target $\tilde{x}_0$ instead:
However, when computing the target velocity for supervision, we replace the deterministic target $x_0$ with a perturbed version $\tilde{x}_0$: 
\begin{align} \tilde{x}_0 = x_0 + \epsilon_\text{reg}, \quad \epsilon_\text{reg} \sim \mathcal{N}(0, \sigma_\text{reg}^2 I) \end{align}
We then have the following updated optimization target $u_t(x_t|\tilde{x}_0) =  x_1 - \tilde{x}_0$. Note that the model input $x_t$ is interpolated using the original, unperturbed $x_0$ and is completely independent of $\epsilon_\text{reg}$.

Crucially, this stochastic injection does not alter the fundamental optimization objective of the model. In the original flow-matching framework~\cite{lipman2023flow}, the network is trained to predict the conditional expectation of the velocity field:
\begin{align} v^*(x_t, t) = \underset{v}{\arg\min} \, \mathbb{E}_{t, x_0, x_1} \left[ \| v(x_t, t) - u_t(x_t \mid x_0) \|^2 \right] = \mathbb{E}_{x_0 \mid x_t} \left[ u_t(x_t \mid x_0) \right] \end{align}
Because standard flow matching paths construct the velocity field to be linear with respect to the target data (\ie, $u_t(x_t|\tilde{x}_0) = x_1 - \tilde{x}_0 = x_1 - x_0 -\epsilon_\text{reg} - x_t = u_t(x_t|x_0)-\epsilon_\text{reg}$), training the network $v_\theta$ to predict this perturbed velocity field causes the new optimal vector field $\tilde{v}^*(x_t, t)$ to evaluate to:
\begin{align}
\tilde{v}^*(x_t, t) &= \mathbb{E}_{x_0, \epsilon_\text{reg} \mid x_t} \left[ u_t(x_t \mid \tilde{x}_0) \right] = \mathbb{E}_{x_0, \epsilon_\text{reg} \mid x_t} \left[ u_t(x_t \mid x_0) - \epsilon_\text{reg} \right]\\
&= \mathbb{E}_{x_0 \mid x_t} \left[ u_t(x_t \mid x_0) \right] - \mathbb{E}_{\epsilon_\text{reg}} \left[ \epsilon_\text{reg} \right] = v^*(x_t, t)
\end{align}
The injected noise $\epsilon_\text{reg}$ is sampled independently with zero mean, ensuring that the perturbation remains unbiased in expectation. Therefore, the global minimum of our regularized objective remains perfectly aligned with the original video's true vector field.

As shown in the toy experiment in~\cref{fig:toy}, instead of generating a collapsed distribution (B), the generation of the TDR model preserves the structures of the prior distributions (F), highlighting the significance of the introduced stochasticity.
Although the expected target remains identical, the variance $\sigma_\text{reg}^2$ fundamentally alters the optimization dynamics. By replacing the strict Dirac delta point with a continuous local Gaussian manifold, we introduce a geometry-aware variance penalty into the loss landscape. The model thus no longer minimizes the loss by lazily memorizing an infinitely sharp, rigid spatial mapping to $x_0$. Instead, it learns to retain its elastic properties as a denoising model and seeks a broader, generalized minimum. Consequently, the model robustly follows target text prompts even after extensive fine-tuning steps, dramatically increasing editing flexibility and stabilizing the TTT process. A more mathematical explanation is shown in Supplementary~\cref{app:math}.

\subsection{Contrastive Classifier-Free-Guidance: }
\label{subsec:contrastivecfg}
In~\cref{fig:toy} (F), after TDR, although the structure of the prior distribution is preserved, the generated points are still drawn closer to the source distribution trained in TTT. The same phenomenon occurs in video diffusion models: after TDR, the model no longer bypass $C_\text{src}$ and memorize the reference video entirely, yet the generated distribution still shifts toward the source distribution, showing a tendency of rendering the concepts introduced in the reference video.

To further cancel out the distribution shift and decouple the concepts of the source video which are planted into the model during training, we propose \textit{Contrastive Classifier-Free-Guidance}, an enhanced Classifier-Free-Guidance (CFG) which actively utilizes the source condition as a negative constraint.
The standard CFG~\cite{ho2022classifier} attempts to steer generation by creating a distribution shift between conditioned and unconditional predictions:
\begin{align}
    \hat{v}_\theta(x_t,t, C_{\text{trg}}) = v_\theta(x_t,t, C_{\text{neg}}) + \lambda\left( v_\theta(x_t,t, C_{\text{trg}}) - v_\theta(x_t,t, C_{\text{neg}}) \right)
\end{align}
where $C_{\text{neg}}$ is typically an empty or generic negative prompt, and $\lambda$ is the guidance scale. 
To further suppress the embedded optimization bias, we expand this guidance into a tri-directional formulation. We explicitly contrast the target against the source, establishing a semantic opposition between them, by introducing a secondary negative constraint leveraging the original source prompt $C_{\text{src}}$:
\begin{align}
    \hat{v}_\theta(x_t,t, C_{\text{trg}}) &= v_\theta(x_t,t, C_{\text{neg}}) + \lambda_1 \left( v_\theta(x_t,t, C_{\text{trg}}) - v_\theta(x_t,t, C_{\text{neg}}) \right) \nonumber \\&+ \lambda_2 \left( v_\theta(x_t,t, C_{\text{trg}}) - v_\theta(x_t,t, C_{\text{src}}) \right)
\end{align}
where $\lambda_1$ controls the standard unconditional guidance and $\lambda_2$ governs the strength of the semantic contrast between the source and target conditions.

By contrasting the two conditions, Contrastive CFG isolates and amplifies the semantic delta between them, repelling the sampling destination away from the source distribution. As demonstrated in~\cref{fig:toy}, applying standard CFG (G)—which merely pulls the velocity toward the target condition—yields only a marginal shift in the generated distribution due to the residual gravitational pull of the memorized source. In contrast, our Contrastive CFG (H) actively repels the generation process from the source concept. This active contrast guarantees crisp semantic transitions and fully restores the model's generative elasticity.

\begin{figure}[t]
    \centering
    \includegraphics[width=1\linewidth]{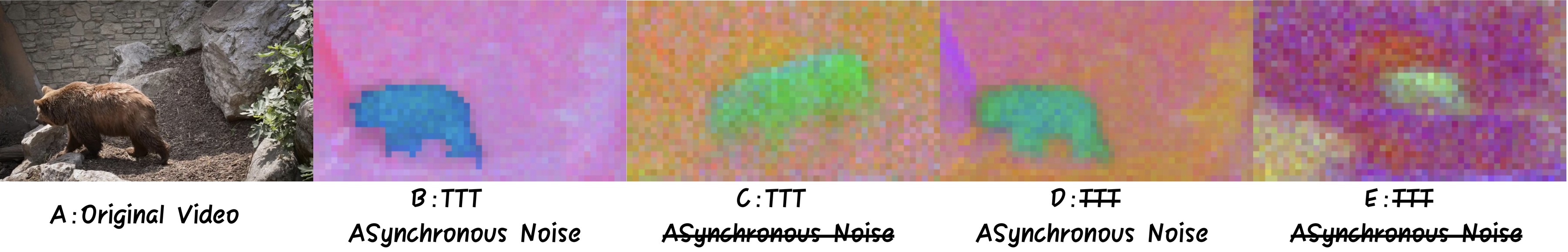}
    \caption{PCA~\cite{hotelling1933analysis} visualization of the DiT latent representations. When TTT is applied without Async-NS, the representation blurs and the model fails to capture the reference video's structure. Async-NS enables the model to extract clear, aligned representation}
    \label{fig:spatial-entangle-visualize}
    \vspace{-4mm}
\end{figure}

\subsection{Asynchronous Noise Scheduling}
\label{subsec:asyncns}
While TDR and Contrastive CFG cure Conditioning Collapse, the fine-tuned network remains vulnerable to Spatial Entanglement. Because the tuned network learns to bypass the noisy input $x_t$ and directly memorize $x_0$, it fails to extract localized, semantically rich spatial representations. As visualized in~\cref{fig:spatial-entangle-visualize} (C), during inference, the representation becomes blurred and misaligned with the source video, with the foreground and background heavily entangled.

Many existing approaches attempt to alleviate Spatial Entanglement by modifying attention maps~\cite{kwon2024unified,zuo2025cut}, using pre-edited images as guide~\cite{ku2024anyvv,gao2025lora,liu2025stablev2v} or intervening the optimization phase with masks~\cite{gao2025lora,wang2025low}. However, these complicated methods often compromise the global structural coherence. In contrast, we propose \textit{Asynchronous Noise Scheduling} (Async-NS), an inference-time strategy that cleanly circumvents Spatial Entanglement by decoupling the integration trajectories of edited and anchored regions.

Our core insight is that temporal uniformity exacerbates Spatial Entanglement. Evaluating the entire latent at a single timestep $t$ allows the network to fall back on its memorized output. By explicitly assigning distinct noise levels and time embeddings to the edited versus anchored regions, we physically break the entanglement of representations. As evidenced in~\cref{fig:spatial-entangle-visualize} (B), this temporal asymmetry forces the network to process the regions distinctly, instantly restoring clear representational boundaries. Furthermore, this decoupling inherently aligns with the distinct generative needs of the regions: the edited area receives a high-noise regime ($T_\text{e}$) for the flexibility to render new concepts, while the preserved region is restricted to a low-noise regime ($T_\text{p}$) that carries the maximum information from the source video.

We define a frame-by-frame mask $M$, either from manual input for accuracy or from an automatic segmentation model for convenience, to mark the edited areas. $M$ is then downsampled to match the latent resolution and spatially smoothed into a continuous transition map $\tilde{M} \in [0, 1]$ to naturalize the concept boundaries in latent space. We define two asynchronous noise schedules: $\{t^i_\text{e}\}_{i=1}^N$ and $\{t^i_\text{p}\}_{i=1}^N$, with $t^N_{(\cdot)} = T_{(\cdot)}$ and $t^1_{(\cdot)} = 0$. The sampling starts from an initial latent $\hat{x}^N$, a noisy version of the source video latent with a mixed noise level, and at each step $i$, the asynchronous updates are spatially fused:
\begin{align}
\hat{x}^N &= \tilde{M}\odot x_{T_\text{e}} + (1-\tilde{M})\odot x_{T_\text{p}}\nonumber\\
\hat{x}^{i-1} &= \tilde{M} \odot \left( \hat{x}^i - \hat{v} \Delta t^{i}_\text{e} \right)+ (1 - \tilde{M}) \odot \left( \hat{x}^i - \hat{v} \Delta t^{i}_\text{p} \right)
\end{align}
where $\hat{v}$ denotes the final velocity by Contrastive CFG predicted with a spatially mixed timestep embedding of $t^i_\text{e}$ and $t^i_\text{p}$ as input: 
\begin{align}
    \hat{v}=\hat{v}_\theta(\hat{x}^i, t^i,C_\text{trg}),\quad t^i =\tilde{M}\odot t^i_\text{e}+(1-\tilde{M})\odot t^i_\text{p}
\end{align} 

By breaking the global temporal synchronization, Async-NS provides the model with a spatially clear, disentangled latent representation with distinct noise levels and timestep embeddings. Meanwhile, this formulation also grants the model localized freedom to aggressively overwrite target concepts along a longer integration path, while safely guiding unedited surroundings back to their exact physical state along a restricted, deterministic path.

\section{Experiment}

\begin{table}[t]
  \caption{\textbf{Quantitative comparison with prior video editing approaches.} Best results are highlighted in \textcolor{red}{red}, and second-best in \textcolor{blue}{blue}. * refers that the method is re-implemented with our configurations.
  }
  
  \centering
  \label{tab:video_editing_comparison}

  \renewcommand{\arraystretch}{1.2}
  \setlength{\tabcolsep}{5pt}
  \begin{tabular}{lccccccc}
    \toprule
    Methods &VBench$\uparrow$& CLIP-T$\uparrow$ & VEBench$\uparrow$&VQ$\uparrow$&IA$\uparrow$&SP$\uparrow$&OVL$\uparrow$ \\
    \midrule
    \multicolumn{8}{l}{\textit{Other Methods}} \\ 
     AnyV2V\cite{ku2024anyvv} & 4.80 &\textcolor{blue}{30.21} &0.53 &3.51&5.89&2.30&4.31\\
     Ground-A-Video\cite{jeong2024groundavideo} & 4.89 &28.51 &0.13&3.57&4.49&2.42& 3.90\\
     Token-flow\cite{geyer2024tokenflow} &4.82 &29.69 &\textcolor{blue}{0.67} &4.47&5.57&4.30&4.83\\
    
     Ditto \cite{bai2025ditto}& 4.88& 28.93&0.60&\textcolor{blue}{5.48}&5.95&3.32&\textcolor{blue}{5.62}\\
     Flow-align \cite{kim2026flowalign}& 4.93&27.72& 0.61&5.28&5.44&\textcolor{red}{7.23}&5.44\\
     Uniedit \cite{bai2025uniedit}&  4.31&\textcolor{red}{31.76} &0.64  &3.87&\textcolor{blue}{6.51}&0.54&5.04\\
      \midrule
    \multicolumn{8}{l}{\textit{Test-Time-Tuning Methods}} \\ 
    Tune-A-Video* \cite{wu2023tune}&\textcolor{red}{5.00}&28.40&0.72&5.34&6.15&3.63&5.39\\
    VidTTA*\cite{wang2025low}&\textcolor{blue}{4.98}&28.47&0.72&5.00&5.76&4.42&5.08\\
    \rowcolor{yellow!30}ElasticTTT&\textcolor{blue}{4.98}& 29.58&\textcolor{red}{0.72} &\textcolor{red}{6.19}&\textcolor{red}{7.50}&\textcolor{blue}{5.23}&\textcolor{red}{6.68} \\
  \bottomrule
  \end{tabular}
    \vspace{-3mm}
\end{table}

\subsection{Experiment setup}
To validate the effectiveness and robustness of ElasticTTT, we conducted extensive experiments and compared it with a wide range of video editing models. We utilize Wan2.1 1.3B~\cite{wan2025wan} as the base model and conduct TTT with Low Rank Adaptation (LoRA)~\cite{hu2022lora} for 100 steps. We set the TDR variance to $\sigma_\text{reg} = 0.2$, the CFG guidance scales to $\lambda_1=6, \lambda_2 = 2$, and distinct noise regimes to $T_\text{p}=0.55, T_\text{e}=0.97$. The segmentation mask $M$ is automatically generated by a segmentation model Grounded-SAM2~\cite{ravi2024sam2} with a VLM Qwen3-VL-32B~\cite{Qwen3-VL} to specify the edited concept.
All experiments are done on a single Nvidia pro6000 GPU, where our full method introduces only a $\sim$7\% inference-time overhead over vanilla TTT (approximately 5\% from Contrastive CFG and 2\% from Async-NS).
More implementation details, along with a sensitivity analysis on these hyperparameters that verifies the robustness of our method, are provided in Supplementary~\cref{exp}.
% \begin{comment}
% \subsection{Experiment setup}
% To validate the effectiveness and robustness of ElasticTTT, we conducted extensive experiments and compared them with a wide range of video editing models. We utilize Wan2.1 1.3B~\cite{wan2025wan} as the base model and conduct TTT with Low Rank Adaptation (LoRA)~\cite{hu2022lora} for 100 steps. We set the TDR variance to $\sigma_\text{reg} = 0.2$, the CFG guidance scales to $\lambda_1=6, \lambda_2 = 2$, and distinct noise regimes to $T_\text{p}=0.55, T_\text{e}=0.97$. The segmentation mask $M$ is automatically generated by a segmentation model Grounded-SAM2~\cite{ravi2024sam2} with a VLM Qwen3-VL-32B~\cite{Qwen3-VL} to specify the edited concept.
% All experiments are done on a single Nvidia pro6000 GPU. More details on the experiment setup are listed in Supplementary.
% We also conduct an ablation study on these hyperparameters to prove the stability.
% ElasticTTT remains highly efficient: it introduces only a $\sim$7\% inference-time overhead relative to the vanilla TTT baseline, of which approximately 5\% is attributed to Contrastive CFG and 2\% to the asynchronous noise scheduling strategy.
% \end{comment}
\begin{figure}[!t]
    \centering
    \includegraphics[width=1\linewidth]{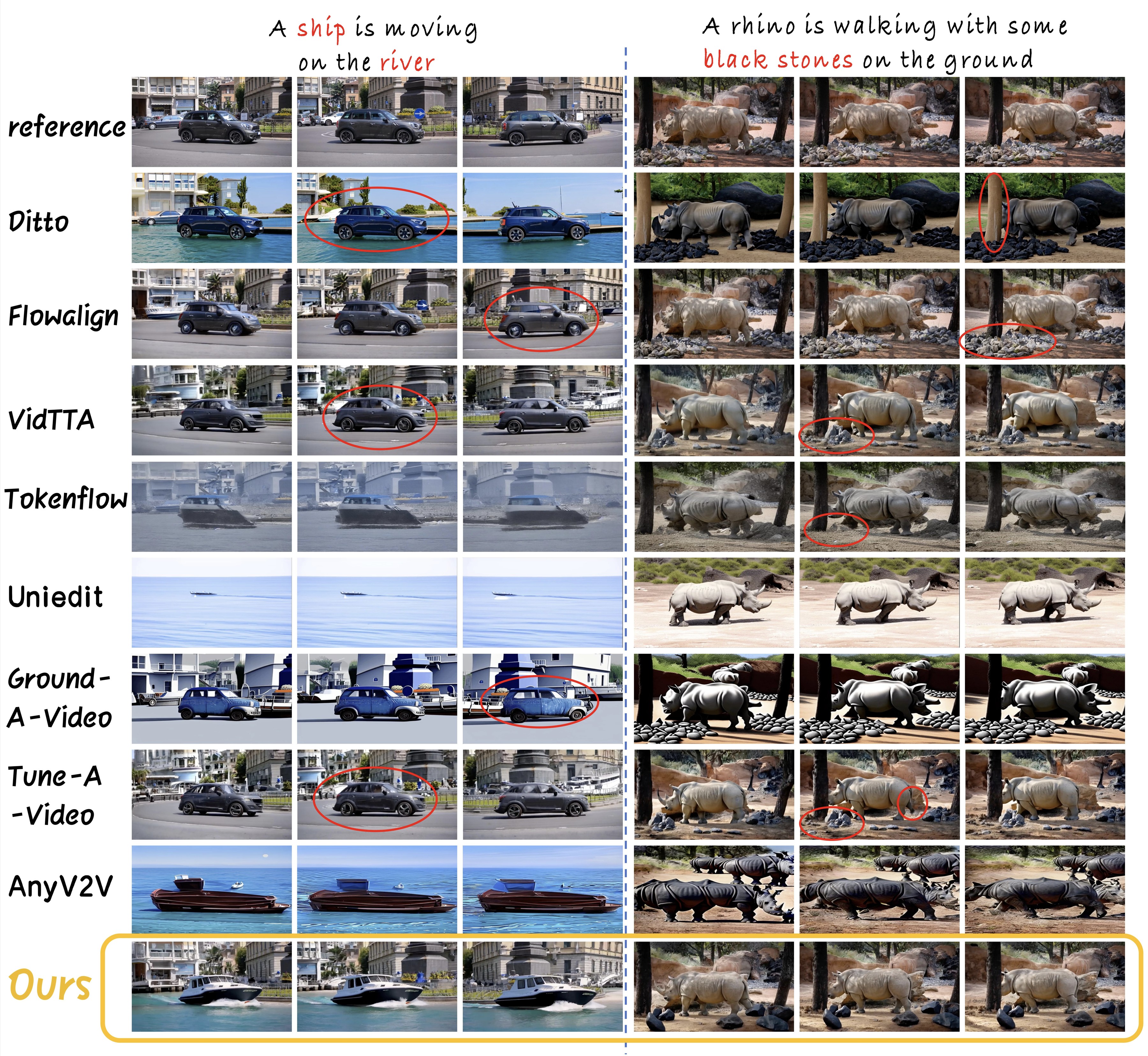}
    \caption{Visual comparisons with other methods. Our method strictly follows the editing instructions while successfully preserving the details in unedited regions.}
    \label{fig:qualitative_comparison}
    \vspace{-5mm}
\end{figure}

\subsubsection{Evaluation Datasets}
Prior works~\cite{ yang2025videograin,meral2024motionflow,kwon2024unified,liang2025looking,geyer2024tokenflow,liang2024flowvid} construct their testset by selecting text-video pairs from the Davis dataset~\cite{pont20172017,perazzi2016benchmark,caelles20182018}. However, either their data is not released~\cite{geyer2024tokenflow,kahatapitiya2024object} or the selection contains only limited editing tasktypes~\cite{liu2025stablev2v}. Thus, we choose to select a testset following the common protocol on our own, including 125 text-video pairs.
For comprehensive evaluation, our testset covers five different types of tasks, specifically: subject editing, background editing, subject addition, color adjustment, and general editing. We define general editing as reconstructing all visual elements in the video while preserving only the general motion dynamics of the reference subject. This editing mode is particularly valuable when the goal is to retain the cinematographic composition (or camera movement) rather than the visual content. We will release our selection to promote fair and comprehensive benchmarking for future works.

\subsubsection{Prior Methods}

We compare ElasticTTT against several strong video editing baselines: FlowAlign~\cite{kim2026flowalign}, UniEdit~\cite{bai2025uniedit}, Ditto~\cite{bai2025ditto}, VidTTA~\cite{wang2025low}, TokenFlow~\cite{geyer2024tokenflow}, AnyV2V~\cite{ku2024anyvv}, Tune-A-Video~\cite{wu2023tune} and Ground-A-Video~\cite{jeong2024groundavideo}. This selection represents a wide-range of conventional video editing paradigms, encompassing training-free methods~\cite{bai2025uniedit, geyer2024tokenflow, jeong2024groundavideo, kim2026flowalign}, first-frame-guided approaches~\cite{ku2024anyvv}, large-scale training-based models~\cite{bai2025ditto}, and other Test-Time Tuning (TTT) methods~\cite{wu2023tune, wang2025low}.
Notably, to ensure fair comparison with TTT-based baselines, we re-implement Tune-A-Video~\cite{wu2023tune} and VidTTA~\cite{wang2025low} using the \textit{exact same configurations} with ElasticTTT, including all settings of base model, training and inference.

\subsubsection{Evaluation Metrics}
We evaluate ElasticTTT and the baseline models using a comprehensive suite of objective and subjective metrics.

\emph{Automatic Metrics}.\ Following standard evaluation protocols~\cite{wang2025t3v2v,bai2025ditto, bai2025uniedit, wu2023tune}, we compute the CLIP similarity to measure semantic alignment between the edited video and the target prompt. In addition, we incorporate established metrics from video generation and editing benchmarks. We include metrics from VBench~\cite{huang2024vbench} to evaluate the overall visual quality and temporal consistency of the generated video. Specifically, we averaged 6 VBench metrics after normalizing them separately, to make a general assessment of the generated video. Also, we include the metric from VEBench~\cite{liu2025vebench} as a general indicator of video editing quality. These automatic metrics primarily serve as efficient and reproducible signals for model development, ablation analysis, and large-scale benchmarking.

\emph{Judge-Based Evaluation}.\ Despite their efficiency, traditional quantitative metrics often fail to capture nuanced editing requirements, such as fine-grained preservation in unedited regions or complex instruction adherence. To address this limitation, we leverage the advanced visual reasoning capabilities of a pretrained large Vision-Language Model (VLM). Specifically, we prompt GPT-5~\cite{openai2025gpt5systemcard} to evaluate generated videos along four dimensions: (1) Video Quality (VQ) measures the quality of generated video, independent of the source video and editing instructions; (2) Instruction Adherence (IA) measures the instruction-following ability of the editing prompt; (3) Source Video Preservation (SP) measures the preservation of the source video in unedited areas; (4) an overall score (OVL), for general assessment.

\emph{Human Study}.\ We further conduct a user study involving 10 participants, who independently rate each method along the same three dimensions: (1) Video Quality (VQ);  (2) Instruction Adherence (IA); and (3) Source preservation (SP). We averaged the three scores to obtain an overall human evaluation result for each video.
We randomly select one editing instruction for each video in our evaluation set to construct a subset for human evaluation. Participants are required to give a 1 to 5 rate on each criteria for each generated video. More details of evaluation metrics are shown in Supplementary~\cref{app:eval}.

\subsection{Comparison with Previous Methods}
\begin{wraptable}[11]{R}{0.5\textwidth}
 \vspace{-30pt}
  \caption{\textbf{Human evaluation results}.Best results are highlighted in \textcolor{red}{red}, and second-best in \textcolor{blue}{blue}.
  }
  \label{tab:video_editing_comparison—human}
  \centering
  \renewcommand{\arraystretch}{1}
  \setlength{\tabcolsep}{3pt}
  \begin{tabular}{lcccc}
    \toprule
    Methods & VQ $\uparrow$& IA $\uparrow$& SP $\uparrow$&Avg.$\uparrow$  \\
    \midrule
      
     AnyV2V~\cite{ku2024anyvv}  &2.50&2.61&2.48&2.54\\
     Ground-A-Video~\cite{jeong2024groundavideo} &1.95& 2.15 &2.37&2.16\\
     Token-flow~\cite{geyer2024tokenflow} &2.83& 2.63&3.16&2.87 \\

      Uniedit~\cite{bai2025uniedit} &2.36& 2.94&2.08 &2.46\\
         Ditto~\cite{bai2025ditto} &\textcolor{blue}{3.85}& \textcolor{blue}{3.20} &3.07 &3.37\\
      Flow-align~\cite{kim2026flowalign} &3.73&2.82&\textcolor{red}{3.81} &\textcolor{blue}{3.45}\\
     \rowcolor{yellow!30} ElasticTTT&\textcolor{red}{3.87}&\textcolor{red}{3.32}&\textcolor{blue}{3.61}&\textcolor{red}{3.60}  \\
    % p-score & 0.95& 0.81 &0.97& 0.625 \\
    
  \bottomrule
  \end{tabular}
\end{wraptable}

\cref{tab:video_editing_comparison} summarizes the quantitative comparison between ElasticTTT and competitive baseline methods. In general, TTT-based methods outperform prevailing video editing approaches due to their ability to adapt to the specific content and temporal dynamics of each input video at test time. Moreover, ElasticTTT achieves the strongest performance among TTT-based methods, with particularly substantial gains on VLM-based evaluations, which provide a holistic and fine-grained assessment of video editing quality across multiple dimensions (VQ 6.19, IA 7.50, and OVL 6.68). Although certain baselines match or exceed ElasticTTT on isolated metrics, they suffer from significant trade-offs. 

For example, Flow-align excels at preservation (SP 7.23) in unedited areas but struggles with editing elasticity and instruction adherence (IA 5.44). Conversely, Uniedit achieves strong semantic alignment with target instructions (evidenced by a high CLIP-T score of 31.76) but fails in preservation (SP 0.54), heavily distorting the original video. As shown in~\cref{fig:qualitative_comparison}, ElasticTTT achieves both precise editing adherence (ship and river, black stones) and source preservation (background building, rhino and trees), whereas other methods exhibit obvious compromises.

As shown in~\cref{tab:video_editing_comparison—human}, the results of human evaluation provide strong support for our quantitative findings. ElasticTTT achieves the highest performance (Average 3.60), indicating a clear human preference over the baseline methods. It excels particularly in editing adherence (IA 3.32) and achieves the top score for video quality (VQ 3.87). Even in preservation, where Flow-align peaks, ElasticTTT maintains a strong second-best performance (SP 3.61) without the severe trade-offs observed in competing models. Additionally, a study provided in the Supplementary~\cref{human} reveals a strong alignment between human volunteer scores and VLM evaluations, demonstrating the rationality of our VLM-based evaluation protocol.

\begin{wraptable}{r}[-1.5em]{0.5\textwidth}
  \centering
   % 视情况调整，消除上方空白
  \renewcommand{\arraystretch}{1.2}
  \setlength{\tabcolsep}{5pt}
 % \vspace{200pt}
  \caption{\textbf{Generalization to a larger base model.}Best results are highlighted in \textcolor{red}{red}.}
  \label{tab:model_scaling}
  \begin{tabular}{lcccc}
    \toprule
    Methods   & VQ$\uparrow$ & IA$\uparrow$ & SP$\uparrow$ & OVL$\uparrow$\\
    \midrule
    Wan2.2-5B-baseTTT & 5.05 & 5.09 & 4.40 & 4.91\\
    Wan2.2-5B-ours & \textcolor{red}{6.47} & \textcolor{red}{7.75} & 3.68 & \textcolor{red}{7.00}\\
    Wan2.1-1.3B-baseTTT & 5.34 & 6.15 & 3.63 & 5.39\\
    Wan2.1-1.3B-ours & {6.19} & {7.50} & \textcolor{red}{5.23} & 6.68\\
    \bottomrule
  \end{tabular}
   % 视情况调整，消除下方空白
\end{wraptable}

To examine whether ElasticTTT generalizes to larger base models, we further conduct the same experiment on Wan2.2-5B, comparing our full method against the vanilla TTT baseline under identical settings. As shown in \cref{tab:model_scaling}, ElasticTTT yields consistent and even more pronounced improvements on the larger backbone: it raises the overall score from 4.91 to 7.00 (+2.09), surpassing the corresponding gain observed on Wan2.1-1.3B (+1.29, from 5.39 to 6.68). The improvement is most evident in instruction adherence (IA 5.09 $\rightarrow$ 7.75) and video quality (VQ 5.05 $\rightarrow$ 6.47), indicating that our target distribution regularization and Contrastive CFG effectively unlock the richer generative prior of the larger model, which would otherwise be suppressed by TTT memorization. Notably, Wan2.2-5B equipped with ElasticTTT achieves the best overall score among all configurations (OVL 7.00), suggesting that our method scales favorably with model capacity. We observe a mild decrease in source preservation (SP 4.40 $\rightarrow$ 3.68), which we attribute to the stronger editing elasticity restored in the larger model: it follows editing instructions more aggressively, trading marginal preservation for substantially better semantic alignment—a trade-off that clearly pays off in the overall quality.

\subsection{Ablation Study}

\begin{figure}[t]
    \centering
    \includegraphics[width=1.0\linewidth]{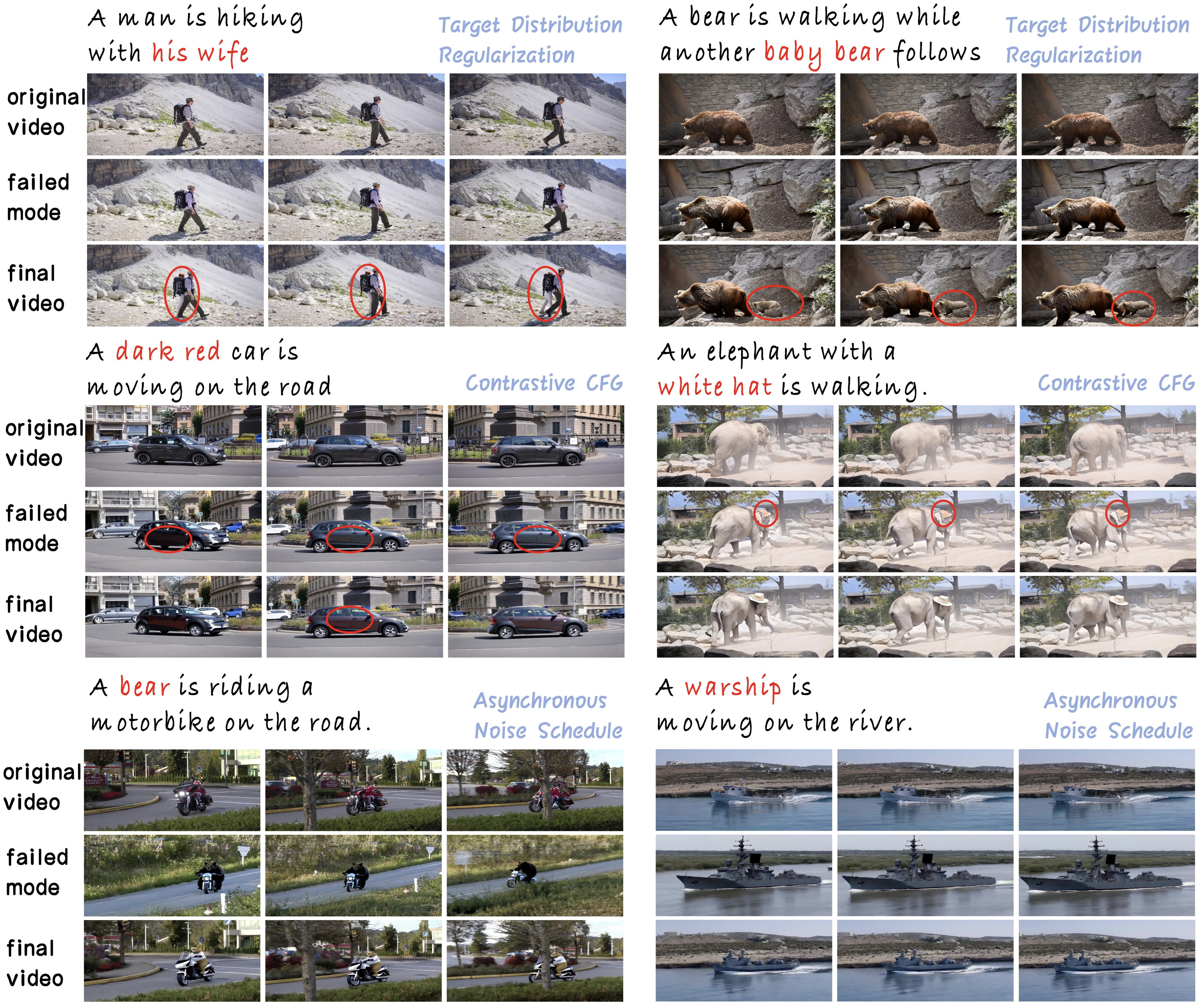}
    \caption{Visualized results of the ablation study.}
    \label{fig:ablation study}
    \vspace{-5mm}
\end{figure}
\begin{wraptable}{r}[-1.5em]{0.5\textwidth}
    \vspace{-10mm}
  \caption{\textbf{Quantitative results of the ablation study. }We exclude key components separately from the full model and report the resulting performance.Best results are highlighted in \textcolor{red}{red}. }
  
  \centering
  %   \scriptsize
  \renewcommand{\arraystretch}{1}
  \setlength{\tabcolsep}{5pt}
  %\vspace{-15pt}

  \label{tab:ablation}
  \begin{tabular}{lccccccc}
    \toprule
    Methods   & VQ$\uparrow$&IA$\uparrow$&SP$\uparrow$&OVL$\uparrow$\\
    \midrule
    None (baseline) &5.34&6.15&3.63&5.39\\
     w/o smooth transition &6.07&7.37&5.09&6.56\\
     w/o Async-NS&5.62&6.55&3.50&6.23\\
   
    w/o TDR  &6.13&7.45&5.36&6.64\\
    w/o contrastive CFG  &6.06&7.01&\textcolor{red}{5.64}&6.52\\
    \rowcolor{yellow!30}Full (ElasticTTT) &\textcolor{red}{6.19}&\textcolor{red}{7.50}&5.23&\textcolor{red}{6.68} \\
    
  \bottomrule
  \end{tabular}
\vspace{3mm}
\end{wraptable}

To assess individual contributions of our proposed designs, we conducted an ablation study systematically removing key components from the ElasticTTT pipeline. The results, presented both quantitatively in~\cref{tab:ablation} and qualitatively in~\cref{fig:ablation study}, confirm that the full method achieves the optimal balance across metrics, and also exhibits the best performance in~\cref{fig:ablation study}.

Omitting TDR negatively affects video quality, editing adherence, and overall score. Although removing it marginally improves the preservation score, this comes at the expense of the model's ability to execute complex edits. As shown in the first row in~\cref{fig:ablation study}, without TDR, the model merely reconstructs the source video and fails to add new concepts (wife, baby bear).

Removing Contrastive CFG leads to a noticeable decline in editing adherence and overall performance. 
Without this mechanism, the model struggles to fully follow the editing prompt, often resulting in an under-edited intermediate state between the reference and the target. As demonstrated in~\cref{fig:ablation study} (second row), the absence of Contrastive CFG results in subtle, less distinguishable edits that lack the visual clarity of our full method.

Both Async-NS and the additional smooth transitions are critical for cohesive video generation. Removing Async-NS causes a drop in all quantitative metrics. Qualitatively, as seen in the third row of~\cref{fig:ablation study}, without Async-NS the model loses its ability to accurately isolate the target regions, incorrectly modifying the foreground and background simultaneously. Meanwhile, using smooth transition map $\tilde{M}$ instead of $M$ is verified to be beneficial, as it results in a minor but distinct improvement in the overall score.
\begin{wraptable}{r}[-1.5em]{0.45\textwidth}
  \centering
  \vspace{-10pt}
  \caption{\textbf{Comparison between Async-NS and re-implemented Latent Blending under identical settings.} Best results are highlighted in \textcolor{red}{red}.}
  \label{tab:latent_blending}
  \begin{tabular}{lcccc}
    \toprule
    Methods   & VQ$\uparrow$ & IA$\uparrow$ & SP$\uparrow$ & OVL$\uparrow$\\
    \midrule
    Latent Blending & 4.65 & 6.02 & 1.08 & 4.71\\
    Async-NS & \textcolor{red}{6.19} & \textcolor{red}{7.50} & \textcolor{red}{5.23} & \textcolor{red}{6.68}\\
    \bottomrule
  \end{tabular}
  \vspace{-10pt}
\end{wraptable}
% \begin{bluehl}
We attribute this significant gap (OVL 6.68 vs. 4.71) to two primary factors.
First, Async-NS introduces \emph{asynchronous noise} inside the DiT forward process. By assigning different noise levels and timestep embeddings to the edited and preserved regions, it forms distinct regional representations during velocity prediction, thereby mitigating TTT-induced spatial entanglement. In contrast, post-hoc blending approaches only replace or mix latents after prediction, and thus cannot prevent entangled foreground/background features from being formed in the first place.

Second, such approaches rigidly hard-preserve the background, which proves counterproductive for realistic video editing: the unmasked region often requires soft physical adjustments, such as shadows, dust, boundary deformation, and contact effects. Forcibly stitching frozen background latents onto the edited content at mismatched noise levels produces visible seams and temporal artifacts, which explains the catastrophic preservation score of Latent Blending (SP 1.08) despite its seemingly conservative design. Async-NS instead suppresses spurious background drift while permitting these necessary interactions—a forward-pass desynchronization mechanism rather than a post-forward background replacement.

% \end{bluehl}

\begin{comment}
    
\emph{Async-NS} We conduct a two-stage ablation study. First, omitting the smooth transition introduces abrupt noise variations, failing to seamlessly connect edited and preserved areas, and degrading all metrics (Table \ref{tab:ablation}). Second, completely removing the strategy (using uniform initial noise) yields a better prompt alignment but destroys the structural details of the preserved regions, ultimately lowering the VLM and editing scores.

\emph{Contrastive CFG} 
Table \ref{tab:ablation} presents the ablation study in contrast Classifier-Free Guidance (CFG). Without it, the model struggles to fully follow the editing prompt, often resulting in an under-edited intermediate state between the reference and the target. Conversely, applying contrastive CFG enables more flexible editing, significantly improving semantic alignment and overall editing quality.

\emph{TDR} 
To validate the TDR, we conducted an ablation study (Table \ref{tab:ablation}). The results demonstrate that adding noise improves nearly all aspects of the generated video. Furthermore, applying this regularization strategy simultaneously improves both editing flexibility and overall visual quality.

\end{comment}

\section{Conclusion}

In this paper, we present \textbf{ElasticTTT}, a novel Test-Time Tuning based video editing framework. We first analyze the prior collapse phenomenon brought by the inherent tension between the rigid optimization target in TTT and the generative nature of diffusion models. By introducing three novel techniques: Target Distribution Regularization, Contrastive CFG, and Asynchronous Noise Scheduling, we resolve the Conditioning Collapse and Spatial Entanglement issues caused by prior collapse and achieve SOTA performance on quantitative results and human evaluation. Detailed ablation studies quantify the contribution of each key component in our framework. We believe our work not only represents an important step toward more reliable diffusion-based models, but also highlights promising directions for designing better personalized AI systems.

%%%%%%%%%%%%%%%%%%%%%%%%%%%%%%%%%%%%%%%%%%%%%%%%%%%%%%%%%%%%

\bibliographystyle{abbrv}
\bibliography{main}

\newpage
\appendix

This is the supplementary material for the paper ``ElasticTTT:Prior-Preserving Test-Time Tuning for Video Editing''. We organize the content as follows:

\vspace{1em}

\noindent \textbf{\textcolor{red}{\ref{math} --}} \textbf{Mathematical Derivations} \\[0.5em]
\noindent \textbf{\textcolor{red}{\ref{exp} --}} \textbf{Details on Experiment Setup} \\[0.5em]
\noindent \textbf{\textcolor{red}{\ref{eval} --}} \textbf{Details on Evaluation Metrics} \\[0.5em]
\noindent \textbf{\textcolor{red}{\ref{human} --}} \textbf{Details on Human Evaluation} \\[0.5em]
\noindent \textbf{\textcolor{red}{\ref{more} --}} \textbf{More Evaluation Results} \\[0.5em]
\noindent \textbf{\textcolor{red}{\ref{corr} --}} \textbf{Correlation Analysis of VLM and Human Judgments} \\[0.5em]
\noindent \textbf{\textcolor{red}{\ref{prompt} --}} \textbf{Prompt Design for VLM Judgment} \\[0.5em]
\noindent \textbf{\textcolor{red}{\ref{async} --}} \textbf{Demonstration of Asynchronous Noise Scheduling Masks} \\[0.5em]
\noindent \textbf{\textcolor{red}{\ref{limit} --}} \textbf{Limitations and Future Work} \\[0.5em]
\noindent \textbf{\textcolor{red}{\ref{demo} --}} \textbf{More Demonstrations} \\[0.5em]
\noindent \textbf{\textcolor{red}{\ref{social} --}} \textbf{Social Impact and Ethical Concerns} \\[0.5em]

\section{Mathematical Derivations}
\label{math}
\label{app:math}
\subsection{Target Distribution Regularization}
We demonstrate in Sec. 3.2 that Target Distribution Regularization (TDR) does not alter the optimization target of the diffusion model. However, it cannot fully explain why TDR preserves the general structure of the generated distribution. Here, we offer an analysis from the perspective of optimization dynamics. Specifically, we analyze the covariance of the stochastic gradients to demonstrate how standard TTT is prone to degenerate memorization, and how our proposed TDR implicitly injects a geometry-aware Gauss-Newton penalty that preserves the pre-trained generative prior.

\subsubsection{Setup and Standard TTT}

Let the pre-trained network's prediction be denoted as $v_\theta(x_t, t, C) \in \mathbb{R}^d$, parameterized by $\theta \in \mathbb{R}^p$, where $x_t$ is the noisy latent, $t$ is the timestep, and $C$ represents the text conditioning. Let the true diffusion target be $u_t$. During standard TTT, the objective for a single stochastic step is the $L_2$ loss:
\begin{align}
\mathcal{L}(\theta) = \frac{1}{2} \| v_\theta(x_t, t, C) - u_t \|^2
\end{align}

Let the instantaneous error vector be $e_t(\theta) = v_\theta(x_t, t, C) - u_t \in \mathbb{R}^d$. We denote the Jacobian of the network output with respect to its parameters as $J_\theta = \nabla_\theta v_\theta(x_t, t, C) \in \mathbb{R}^{d \times p}$.

\begin{definition}[Stochastic Gradient and Covariance]
The stochastic gradient for a single iteration of standard TTT is $g(\theta) = \nabla_\theta \mathcal{L} = J_\theta^T e_t(\theta)$. The expected (true) gradient is $\bar{g} = \mathbb{E}_{t, \epsilon}[g(\theta)]$. The covariance matrix of this stochastic gradient is defined as:
\begin{align}
\Sigma(\theta) = \mathbb{E}_{t, \epsilon} \left[ (g(\theta) - \bar{g})(g(\theta) - \bar{g})^T \right]
\end{align}
\end{definition}

\begin{proposition}[Prior Collapse of Standard TTT]
Assume the highly overparameterized model is capable of perfect memorization of a target $x_0$. As the model perfectly overfits the spatial mapping, the instantaneous error vanishes ($e_t(\theta) \to \mathbf{0}$), causing the gradient covariance to vanish: $\Sigma(\theta) \to \mathbf{0}$.
\end{proposition}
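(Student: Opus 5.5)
The plan is to make the informal limit precise: take a sequence of parameters $\theta_k$ along which the instantaneous error $e_t(\theta_k)$ tends to $\mathbf{0}$ uniformly over the training randomness $(t,\epsilon)$. Then show that $\Sigma(\theta_k)\to\mathbf{0}$ by bounding it above by a quantity that is quadratic in the error.

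\emph{Interpreting the assumption.} In standard TTT, the regression target $u_t(x_t\mid x_0)=x_1-x_0$ is a deterministic function of the network input $(x_t,t)$. Indeed, $x_1=(x_t-(1-t)x_0)/t$ with $x_0$ fixed, so $u_t=(x_t-x_0)/t$. Hence perfect memorization of $x_0$ is an achievable regime in which the network reproduces the target exactly for every input. We formalize it as $\sup_{t,\epsilon}\|e_t(\theta_k)\|\to 0$; an $L^2$ version would also suffice. We also assume the Jacobians stay bounded along the sequence, $\sup_{t,\epsilon}\|J_{\theta_k}\|\le B$. This is a mild regularity hypothesis on the network, which we would state explicitly since the proposition leaves it implicit.

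\emph{Bounding the covariance.} Since $\Sigma(\theta)$ is a covariance, we have $\Sigma(\theta)=\mathbb{E}[gg^T]-\bar g\bar g^T\preceq \mathbb{E}[gg^T]$. Taking traces gives
\begin{align}
\operatorname{tr}\Sigma(\theta)\le \mathbb{E}_{t,\epsilon}\|J_\theta^T e_t(\theta)\|^2\le \mathbb{E}_{t,\epsilon}\big[\|J_\theta\|^2\|e_t(\theta)\|^2\big]\le B^2\,\mathbb{E}_{t,\epsilon}\|e_t(\theta)\|^2 .
\end{align}
Under the assumption, the right-hand side tends to $0$. Because $\Sigma$ is positive semidefinite, $\|\Sigma\|_F\le\operatorname{tr}\Sigma$, so $\Sigma(\theta_k)\to\mathbf{0}$. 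By the same bound, $\bar g\to 0$, which shows that the iterates settle at a noiseless stationary point.

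\emph{Main obstacle.} The only delicate point is justifying the hypotheses rather than the algebra.
\begin{itemize}
\item We need that $e_t\to 0$ is compatible with the objective. This is the determinism of $u_t$ given $x_t$ noted above, which holds precisely because the data distribution is the Dirac $\delta(x-x_0)$. It is exactly what fails under TDR: there $u_t(x_t\mid\tilde x_0)$ contains the independent term $-\epsilon_{\text{reg}}$, so the error cannot vanish, and the residual covariance retains $J_\theta^T\sigma_{\text{reg}}^2 J_\theta$, a Gauss--Newton term.
\item We need the Jacobian bound $B$. If it cannot be assumed globally, we would instead invoke Cauchy--Schwarz with a fourth-moment bound on $\|J_\theta\|$ and $L^4$ convergence of $e_t$.
\end{itemize}
We would present the result as holding under these stated regularity conditions.
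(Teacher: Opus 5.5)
Your proposal is correct and follows the same route as the paper. Both write $g(\theta)=J_\theta^T e_t(\theta)$ and let $e_t\to\mathbf{0}$ to conclude $\bar g\to\mathbf{0}$ and $\Sigma(\theta)\to\mathbf{0}$, and your bound $\mathrm{tr}\,\Sigma(\theta)\le B^2\,\mathbb{E}_{t,\epsilon}\|e_t(\theta)\|^2$ makes rigorous the passage of the limit through the expectation, which the paper takes for granted. Keep the bounded-Jacobian hypothesis explicit: the paper's proof uses it silently (without it $J_\theta^T e_t$ need not vanish), and its later discussion asserts that $\|J_\theta\|_F^2$ explodes at memorizing minima, which conflicts with that hidden assumption.
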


\begin{proof}
By definition, $g(\theta) = J_\theta^T e_t(\theta)$. As the model perfectly fits the target distribution, $v_\theta(x_t, t, C) \to u_t$, yielding $e_t(\theta) \to \mathbf{0}$. Consequently, for all $t$ and $\epsilon$, $g(\theta) \to \mathbf{0} \Rightarrow \bar{g} \to \mathbf{0} \Rightarrow \Sigma(\theta) \to \mathbf{0}$.
\end{proof}

In the Stochastic Differential Equation (SDE) view of Stochastic Gradient Descent ($d\theta = -\eta \bar{g} dt + \sqrt{\eta \Sigma} dW$)~\cite{mandt2017stochastic}, the term $\Sigma$ governs the exploration of the optimizer. When $\Sigma \to \mathbf{0}$, the optimizer loses all explorative ability and becomes irreversibly trapped in a sharp singularity where semantic conditioning $C$ and prior $x_t$ are bypassed in favor of rigid memorization.

\subsubsection{Target Distribution Regularization (TDR)}

To prevent prior collapse, we introduce a zero-mean target regularization term $\epsilon_{\text{reg}} \sim \mathcal{N}(0, \sigma_{\text{reg}}^2 I)$. The regularized target becomes $\tilde{u}_t = u_t + \epsilon_{\text{reg}}$, while the input $x_t$ remains strictly unmodified.

\begin{proposition}[Unbiased Expected Gradient]
The expected gradient under Target Distribution Regularization (TDR) remains strictly identical to the expected gradient of Standard TTT: $\bar{g}_\text{TDR} = \bar{g}$.
\end{proposition}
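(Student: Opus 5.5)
The argument is a direct computation using linearity of expectation, the independence of $\epsilon_{\text{reg}}$, and the fact that the network input is unaffected by $\epsilon_{\text{reg}}$.

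\textbf{Step 1: the TDR gradient.} Under TDR the single-step loss is $\tfrac12\|v_\theta(x_t,t,C)-\tilde u_t\|^2$. The regularized error vector is
\begin{align}
\tilde e_t(\theta) = v_\theta(x_t,t,C) - u_t - \epsilon_{\text{reg}} = e_t(\theta) - \epsilon_{\text{reg}}.
\end{align}
The regression target is written as $\tilde u_t = u_t + \epsilon_{\text{reg}}$. This differs in sign from the main text's $u_t - \epsilon_{\text{reg}}$, but the sign is immaterial because $\epsilon_{\text{reg}}$ is symmetric and zero-mean.

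The input $x_t$ is built from the unperturbed $x_0$. Hence the Jacobian $J_\theta=\nabla_\theta v_\theta(x_t,t,C)$ is exactly the same object as in Standard TTT and does not depend on $\epsilon_{\text{reg}}$. The stochastic gradient therefore splits as
\begin{align}
g_{\text{TDR}}(\theta) = J_\theta^T \tilde e_t(\theta) = g(\theta) - J_\theta^T \epsilon_{\text{reg}}.
\end{align}

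\textbf{Step 2: take expectations.} We average over $t$, $\epsilon$, and the independent $\epsilon_{\text{reg}}$. The first term averages to $\bar g$ by the definition of the stochastic gradient and its covariance.

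For the second term, we condition on $(t,\epsilon)$. Since $J_\theta$ is measurable with respect to $(t,\epsilon)$ and $\epsilon_{\text{reg}}$ is independent of $(t,\epsilon)$,
\begin{align}
\mathbb{E}\bigl[J_\theta^T\epsilon_{\text{reg}}\mid t,\epsilon\bigr] = J_\theta^T\,\mathbb{E}[\epsilon_{\text{reg}}] = \mathbf{0}.
\end{align}
The tower property then gives $\bar g_{\text{TDR}}=\bar g$.

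\textbf{Main obstacle: justifying the independence and integrability step.} The key structural point is that the perturbation enters only the target and not $x_t$. This is what keeps $J_\theta$ and $e_t$ independent of $\epsilon_{\text{reg}}$. If the noise entered $x_t$ instead, $J_\theta$ would depend on $\epsilon_{\text{reg}}$ and the cross term would not vanish.

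We also need $\mathbb{E}\|J_\theta\|<\infty$ so that the expectations exist and can be split. This is a mild regularity assumption on the network, and we state it explicitly.

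Finally, the same decomposition sets up the companion covariance result. The cross terms vanish by independence, giving
\begin{align}
\Sigma_{\text{TDR}} = \Sigma + \sigma_{\text{reg}}^2\,\mathbb{E}[J_\theta^T J_\theta].
\end{align}
This Gauss-Newton term is the one the appendix is building toward.
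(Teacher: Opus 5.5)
Your proposal is correct and matches the paper's proof: you write $g_{\text{TDR}}(\theta) = g(\theta) - J_\theta^T\epsilon_{\text{reg}}$ and the second term vanishes in expectation because $\epsilon_{\text{reg}}$ is zero-mean and independent of $(t,\epsilon)$. Your tower-property conditioning is a more careful version of the paper's factorization $\mathbb{E}[J_\theta^T]\,\mathbb{E}[\epsilon_{\text{reg}}]$, and your integrability assumption makes explicit a detail the paper leaves implicit.
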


\begin{proof}
The regularized stochastic gradient is given by:
\begin{align}
g_\text{TDR}(\theta) = J_\theta^T (v_\theta(x_t, t, C) - (u_t + \epsilon_{\text{reg}})) = g(\theta) - J_\theta^T \epsilon_{\text{reg}}
\end{align}
Taking the expectation over $t, \epsilon$, and the independent noise $\epsilon_{\text{reg}}$:
\begin{align}
\mathbb{E}[g_\text{TDR}(\theta)] = \mathbb{E}_{t, \epsilon} [g(\theta)] - \mathbb{E}_{t, \epsilon} \left[ J_\theta^T \right] \mathbb{E}_{\epsilon_{\text{reg}}} [\epsilon_{\text{reg}}]
\end{align}
Since $\epsilon_{\text{reg}}$ is zero-mean, the second term vanishes, yielding $\bar{g}_\text{TDR} = \bar{g}$.
\end{proof}

\begin{proposition}[Geometry-Aware Covariance Injection]
Under TDR, the gradient covariance matrix analytically decomposes into the standard covariance plus a Gauss-Newton penalty term proportional to $\sigma_{\text{reg}}^2$:
\begin{align}
\Sigma_\text{TDR}(\theta) = \Sigma(\theta) + \sigma_{\text{reg}}^2 \mathbb{E}_{t, \epsilon} \left[ J_\theta^T J_\theta \right]
\end{align}
\end{proposition}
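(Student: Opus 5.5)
The plan is a direct covariance computation built on the decomposition already used in the proof of the previous proposition: $g_\text{TDR}(\theta) = g(\theta) - J_\theta^T \epsilon_{\text{reg}}$. By that proposition, $\bar{g}_\text{TDR} = \bar{g}$. Hence the centered TDR gradient is
\begin{align}
g_\text{TDR}(\theta) - \bar{g}_\text{TDR} = \bigl(g(\theta) - \bar{g}\bigr) - J_\theta^T \epsilon_{\text{reg}}.
\end{align}
I will form the outer product of this vector with itself and take the expectation jointly over $t$, $\epsilon$ (the randomness generating $x_t$), and the independent $\epsilon_{\text{reg}}$. This produces three groups of terms.

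\textbf{Term 1: the pure $g$ term.} The term $(g-\bar{g})(g-\bar{g})^T$ involves no $\epsilon_{\text{reg}}$. Its expectation is exactly $\Sigma(\theta)$ by the Definition.

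\textbf{Term 2: the cross terms.} These are $-(g-\bar{g})\epsilon_{\text{reg}}^T J_\theta$ and its transpose. Here I will use that $\epsilon_{\text{reg}}$ is sampled independently of $(t,\epsilon)$. The quantities $g$, $\bar g$, and $J_\theta$ depend only on $(t,\epsilon)$ at fixed $\theta$. Conditioning on $(t,\epsilon)$ therefore factorizes the expectation as $\mathbb{E}_{t,\epsilon}\bigl[(g-\bar g)\,\mathbb{E}[\epsilon_{\text{reg}}]^T J_\theta\bigr] = 0$, since $\epsilon_{\text{reg}}$ has zero mean.

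\textbf{Term 3: the pure noise term.} The remaining term is $J_\theta^T \epsilon_{\text{reg}}\epsilon_{\text{reg}}^T J_\theta$. Conditioning again on $(t,\epsilon)$ and using $\mathbb{E}[\epsilon_{\text{reg}}\epsilon_{\text{reg}}^T] = \sigma_{\text{reg}}^2 I_d$ gives $\sigma_{\text{reg}}^2 J_\theta^T J_\theta$. Taking the outer expectation yields $\sigma_{\text{reg}}^2\,\mathbb{E}_{t,\epsilon}[J_\theta^T J_\theta]$.

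Summing the three contributions gives the claimed identity.

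\textbf{Where care is needed.} The main obstacle is justifying the factorization in the cross term and the noise term. One has to be explicit that $J_\theta$ is evaluated at the unperturbed input $x_t$, so it is measurable with respect to $(t,\epsilon)$ alone. This holds precisely because TDR perturbs only the target and leaves $x_t$ clean, as stressed in \cref{subsec:TDR}. The tower property then applies cleanly.

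A secondary point is notational. In the appendix the perturbation is placed on $u_t$ rather than on $x_0$, and these differ by a sign: the main text gives $u_t(x_t\mid\tilde x_0) = u_t - \epsilon_\text{reg}$. The sign does not affect the result, because $\epsilon_{\text{reg}}$ is symmetric, so $-\epsilon_{\text{reg}}$ has the same zero mean and covariance $\sigma_{\text{reg}}^2 I$.

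Finally, I will note that $J_\theta^T J_\theta$ is the Gauss--Newton matrix of the squared loss. This justifies the name and shows that the added term is positive semidefinite. In particular, it does not vanish even when $e_t \to \mathbf{0}$, in contrast with the Prior Collapse proposition.
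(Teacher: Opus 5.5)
Your proposal is correct and follows essentially the same route as the paper: substitute $g_\text{TDR} = g - J_\theta^T\epsilon_{\text{reg}}$, center at $\bar g$ using the unbiasedness proposition, expand into four terms, kill the two cross terms by the zero mean and independence of $\epsilon_{\text{reg}}$, and evaluate the noise term with $\mathbb{E}[\epsilon_{\text{reg}}\epsilon_{\text{reg}}^T] = \sigma_{\text{reg}}^2 I$. Your handling of the cross terms by conditioning on $(t,\epsilon)$ is slightly cleaner than the paper's, which writes the cross-term expectation as a product of three separate expectations even though $g$ and $J_\theta$ are not independent of each other; only the independence of $\epsilon_{\text{reg}}$ from $(t,\epsilon)$ is needed, and that is exactly what you use.
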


\begin{proof}
By definition, the new covariance is:
\begin{align}
\Sigma_\text{TDR}(\theta) = \mathbb{E} \left[ (g_\text{TDR}(\theta) - \bar{g})(g_\text{TDR}(\theta) - \bar{g})^T \right]
\end{align}
Substitute $g_\text{TDR}(\theta) = g(\theta) - J_\theta^T \epsilon_{\text{reg}}$ into the expectation:
\begin{align}
\Sigma_\text{TDR}(\theta) &= \mathbb{E} \left[ \left( (g - \bar{g}) - J_\theta^T \epsilon_{\text{reg}} \right) \left( (g - \bar{g}) - J_\theta^T \epsilon_{\text{reg}} \right)^T \right] \nonumber \\
&= \mathbb{E} \left[ (g - \bar{g})(g - \bar{g})^T \right] - \mathbb{E} \left[ (g - \bar{g}) \epsilon_{\text{reg}}^T J_\theta \right] \nonumber \\
&\quad - \mathbb{E} \left[ J_\theta^T \epsilon_{\text{reg}} (g - \bar{g})^T \right] + \mathbb{E} \left[ J_\theta^T \epsilon_{\text{reg}} \epsilon_{\text{reg}}^T J_\theta \right]
\end{align}
Because $\epsilon_{\text{reg}}$ is zero-mean and independent of the standard gradient components $g$ and $J_\theta$, the expectations of the two cross-terms evaluate strictly to zero:
\begin{align}
\mathbb{E} \left[ (g - \bar{g}) \epsilon_{\text{reg}}^T J_\theta \right] = \mathbb{E}_{t,\epsilon} \left[ (g - \bar{g}) \right] \mathbb{E}_{\epsilon_{\text{reg}}} [\epsilon_{\text{reg}}^T] \mathbb{E}_{t,\epsilon}[J_\theta] = \mathbf{0}
\end{align}
Thus, the total expectation simplifies to the sum of the expectations of the two remaining terms. Recognizing that the first term is exactly the definition of the standard covariance $\Sigma(\theta)$:
\begin{align}
\Sigma_\text{TDR}(\theta) = \Sigma(\theta) + \mathbb{E}_{t, \epsilon} \left[ J_\theta^T \mathbb{E}_{\epsilon_{\text{reg}}}[\epsilon_{\text{reg}} \epsilon_{\text{reg}}^T] J_\theta \right]
\end{align}
Since the covariance of the injected noise is $\mathbb{E}[\epsilon_{\text{reg}} \epsilon_{\text{reg}}^T] = \sigma_{\text{reg}}^2 I$, the expression rigorously evaluates to:
\begin{align}
\label{sigmatdr}
\Sigma_\text{TDR}(\theta) = \Sigma(\theta) + \sigma_{\text{reg}}^2 \mathbb{E}_{t, \epsilon} \left[ J_\theta^T J_\theta \right]
\end{align}
\end{proof}

\begin{proposition}[Implicit Jacobian Penalty]
TDR injects optimizer noise proportional to the squared Frobenius norm of the Jacobian, mathematically measuring the ``sharpness''~\cite{foret2020sharpness, hochreiter1997flat, keskar2016large} of the parameter space:
\begin{align}
\text{Tr}(\Sigma_\text{TDR}) = \text{Tr}(\Sigma) + \sigma_{\text{reg}}^2 \mathbb{E}_{t, \epsilon} \left[ \| J_\theta \|_F^2 \right]
\end{align}
\end{proposition}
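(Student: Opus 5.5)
The plan is to take the trace of both sides of the covariance decomposition in Eq.~\eqref{sigmatdr}, established in the preceding proposition (Geometry-Aware Covariance Injection), namely $\Sigma_\text{TDR}(\theta) = \Sigma(\theta) + \sigma_{\text{reg}}^2 \mathbb{E}_{t,\epsilon}[J_\theta^T J_\theta]$. We assume that result, so no new probabilistic argument is needed. The cross terms have already been eliminated there using the zero mean of $\epsilon_{\text{reg}}$ and its independence from $(t,\epsilon)$.

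The steps, in order, are as follows.

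First, linearity of the trace splits the right-hand side:
\begin{align}
\text{Tr}(\Sigma_\text{TDR}) = \text{Tr}(\Sigma) + \sigma_{\text{reg}}^2\,\text{Tr}\left(\mathbb{E}_{t,\epsilon}[J_\theta^T J_\theta]\right).
\end{align}

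Second, the trace is commuted with the expectation. Trace is a finite linear functional on $\mathbb{R}^{p\times p}$, so this only requires that the entries of $J_\theta^T J_\theta$ be integrable. We take this as implicitly assumed, since the covariance in the previous proposition must already be finite.

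Third, we use the identity $\text{Tr}(J_\theta^T J_\theta) = \sum_{i=1}^d\sum_{j=1}^p (J_\theta)_{ij}^2 = \|J_\theta\|_F^2$. Combining the three steps gives the claimed formula.

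There is no real technical obstacle; the statement is an immediate corollary. The only point needing care is the integrability justification for exchanging $\text{Tr}$ and $\mathbb{E}$, which follows from the same finiteness assumptions already used for $\Sigma$.

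The interpretive clause, that $\|J_\theta\|_F^2$ measures ``sharpness'', is not part of what is proved. I would remark on it only after the calculation, by recalling the Gauss--Newton approximation $\nabla_\theta^2\mathcal{L}\approx J_\theta^T J_\theta$ at near-zero residual. Under that approximation $\mathbb{E}\|J_\theta\|_F^2$ is the trace of the expected Gauss--Newton curvature, which connects it to the cited flatness notions. This is heuristic support for the wording rather than a step of the proof.
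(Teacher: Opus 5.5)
Your proposal is correct and follows the paper's own proof, which consists of the single step of taking the trace of both sides of the covariance decomposition $\Sigma_\text{TDR}(\theta) = \Sigma(\theta) + \sigma_{\text{reg}}^2 \mathbb{E}_{t,\epsilon}[J_\theta^T J_\theta]$. Your use of linearity of the trace, exchanging trace and expectation (with the integrability caveat), and the identity $\text{Tr}(J_\theta^T J_\theta) = \|J_\theta\|_F^2$ only spells out what the paper leaves implicit, and you are right that the ``sharpness'' reading is interpretive rather than part of the proof.
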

\begin{proof}
Taking the Trace over both sides of~\cref{sigmatdr} yields.
\end{proof}

The term $\mathbb{E}[J_\theta^T J_\theta]$ is the uncentered covariance of the Jacobian, commonly recognized as the Gauss-Newton approximation~\cite{gauss1877theoria} of the Hessian under $L_2$ losses. If the network attempts to form a degenerate, stiff spatial singularity (bypassing conditions and priors to perfectly memorize $x_0$), the Jacobian norm $\| J_\theta \|_F^2$ explodes. 

Consequently, even as the empirical loss approaches zero ($e_t \to 0$ and $\Sigma \to \mathbf{0}$), the optimizer is subjected to persistent, geometry-aware noise: $\sigma_{\text{reg}}^2 \| J_\theta \|_F^2$. This exploding covariance violently ejects the optimizer from sharp minima via SDE Brownian motion. To converge, the network is physically forced to reach a parameter setting where $\| J_\theta \|_F^2$ remains small and stable. The original pre-trained generative prior, where the predicted velocity matches the ground truth across the entire noisy distribution, represents precisely this required flat configuration. Thus, TDR forbids prior collapse not by altering the global expected gradient, but by rendering collapsed states physically unreachable.

\subsection{Contrastive Classifier-Free Guidance}

We claim in Sec. 3.3 that our Contrastive Classifier-Free Guidance (CFG)~\cite{ho2022classifier} actively repels the generated distribution from the source distribution, overcoming the distributional shift brought by the single-data optimization. 
This section provides a formal Bayesian analysis of this mechanism, providing a theoretical perspective. We formulate this mechanism as a dynamic, geometry-aware filter that mathematically neutralizes the distributional shift induced by TTT.

\subsubsection{The TTT Gravity Well}

To elucidate the failure of standard CFG post-TTT, we analyze the optimization dynamics through the lens of Energy-Based Models (EBMs)~\cite{lecun2006tutorial,song2020score, song2019generative}. In diffusion frameworks, the network estimates the score function~\cite{song2019generative} of a data distribution: $s_\theta(x_t, C) = \nabla_{x_t} \log p_\theta(x_t | C)$.

\begin{definition}[The Energy Landscape of TTT]
Let the pretrained base model describe an implicit probability distribution $p_{\text{base}}(x_t | C)$. TTT aggressively optimizes the network to reconstruct the source video trajectory $x_{\text{src}}$, effectively forcing the model to assign maximum likelihood to $x_{\text{src}}$ when conditioned on the source prompt $C_{\text{src}}$.
\end{definition}

\begin{proposition}[The Gradient Bias Field]
During prior collapse, TTT structurally shifts the base distribution by introducing a highly localized memorization peak. After TTT, the generated distribution shifts, and the score of the shifted distribution decomposes into the base score and an additive gradient bias field, $\nabla_{x_t} B(x_t)$.
\end{proposition}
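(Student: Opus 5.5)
The plan is to make the statement essentially definitional. We model the post-TTT distribution as a multiplicative (energy-based) reweighting of the base distribution, then take logarithms and gradients. Concretely, following the Definition of the Energy Landscape of TTT, we posit that the tuned model induces a density of the form
\begin{align}
p_{\theta'}(x_t \mid C) = \frac{1}{Z(C)}\, p_{\text{base}}(x_t \mid C)\, \exp\bigl(B(x_t)\bigr),
\end{align}
where $B$ is the log-ratio $B(x_t) = \log p_{\theta'}(x_t\mid C) - \log p_{\text{base}}(x_t\mid C) + \log Z(C)$. This is always possible whenever both densities are positive. The first step is therefore to justify that this representation exists with $B$ well defined, which only requires strict positivity of the smoothed marginals. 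This holds automatically for $t>0$ because $x_t$ is a Gaussian convolution of the data.

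The second step applies $\nabla_{x_t}\log$ to both sides. The normalizer $Z(C)$ does not depend on $x_t$, so it drops out, and we obtain
\begin{align}
s_{\theta'}(x_t, C) = s_{\text{base}}(x_t, C) + \nabla_{x_t} B(x_t),
\end{align}
which is the claimed additive decomposition.

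The third step gives $B$ its ``localized memorization peak'' character, linking it to the prior-collapse discussion. Here we would use the Dirac-delta picture from the main text. Under perfect memorization with $p(x_0)=\delta(x_0-x_{\text{src}})$, the optimal flow-matching velocity is $\mathbb{E}_{x_0\mid x_t}[u_t]$ with $x_0=x_{\text{src}}$. The implied marginal at time $t$ is therefore $\mathcal{N}\bigl((1-t)x_{\text{src}}, t^2 I\bigr)$. Consequently, in the fully collapsed limit, $B$ behaves like $-\|x_t-(1-t)x_{\text{src}}\|^2/(2t^2)$ minus the base log-density, up to constants. For the partially collapsed, LoRA-tuned model, we would interpolate by arguing that $B$ is concentrated (large gradient) near the source trajectory $(1-t)x_{\text{src}}$ and small elsewhere. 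Its gradient then points toward $x_{\text{src}}$: this is the ``gravity well.''

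The main obstacle is this last step. The additive decomposition itself is a tautology once the EBM form is assumed, but ``highly localized'' is not derivable without extra assumptions about how much the tuned network deviates from the base away from the source trajectory. To handle it, I would first try a bound via the Jacobian-penalty discussion earlier in this appendix. Small $\|J_\theta\|_F$ updates under LoRA should translate into $\nabla B$ being small outside a neighborhood of the training trajectory. Failing that, I would state localization as an explicit modeling assumption rather than a derived fact.
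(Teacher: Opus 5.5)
This is essentially the paper's proof, which factors $p_{\theta^*}(x_t\mid C_{\text{src}})=Z^{-1}p_{\text{base}}(x_t\mid C_{\text{src}})\,p_{\text{mem}}(x_t)$, sets $B=\log p_{\text{mem}}$ (your $e^{B}$) and applies $\nabla_{x_t}\log$. Two of your steps are cleaner than the paper's: defining $B$ as the log-density ratio makes existence automatic, where the paper appeals to universal approximation (note that this $B$ depends on $C$, so fix $C=C_{\text{src}}$ as the paper does), and your collapsed marginal $\mathcal{N}\bigl((1-t)x_{\text{src}},t^2I\bigr)$ is more accurate than the paper's $p_{\text{mem}}\to\delta(x_t-x_{\text{src}})$, which holds only as $t\to0$.

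The ``highly localized'' part is only asserted in the paper too, so stating it as a modeling assumption is the right fallback. The parameter-Jacobian route would not deliver it, since a small $\|J_\theta\|_F$ at training inputs does not control $\nabla_{x_t}B$ at other inputs. Moreover, your own limit $\nabla_{x_t}B=-\bigl(x_t-(1-t)x_{\text{src}}\bigr)/t^2-s_{\text{base}}(x_t)$ has norm $\|x_t-(1-t)x_{\text{src}}\|/t^2$ at base modes away from the source, which is the opposite of the ``large near the trajectory, small elsewhere'' profile you propose.
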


\begin{proof}
Because neural networks are universal function approximators, the overfitted probability distribution $p_{\theta^*}$ can be factored into the original pre-trained prior and a new, TTT-induced memorization peak:
\begin{align}
p_{\theta^*}(x_t | C_{\text{src}}) = \frac{1}{Z} p_{\text{base}}(x_t | C_{\text{src}}) \cdot p_{\text{mem}}(x_t)
\end{align}
where $p_{\text{mem}}(x_t)$ approaches a Dirac delta function $\delta(x_t - x_{\text{src}})$ as the TTT loss converges, and $Z$ is a regularization term to guarantee $\int p_{\theta^*}(x_t|C_\text{src})\mathrm{d}x_t=1$. The tuned score function is the log-likelihood gradient of this factorization:
\begin{align}
s_{\theta^*}(x_t, C_{\text{src}}) = \nabla_{x_t} \log p_{\text{base}}(x_t | C_{\text{src}}) + \nabla_{x_t} \log p_{\text{mem}}(x_t)
\end{align}
Defining the scalar field $B(x_t) = \log p_{\text{mem}}(x_t)$ yields the additive form:
\begin{align}
s_{\theta^*}(x_t, C_{\text{src}}) = s_{\text{base}}(x_t, C_{\text{src}}) + \nabla_{x_t} B(x_t)
\end{align}
\end{proof}

Mathematically, $B(x_t)$ acts as a deep, artificial energy well sculpted into the latent space. Its gradient, $\nabla_{x_t} B(x_t)$, acts as a spatial force vector aggressively pulling latents toward the source video $x_{\text{src}}$. We name this gradient bias $\nabla_{x_t} B(x_t)$ the \textit{source attractor bias}.

\subsubsection{Semantic Leakage of the TTT Bias}

Due to parameter sharing across conditions, this localized energy well unavoidably ``bleeds'' into other prompt conditions during inference.

\begin{definition}[Condition-Dependent TTT Bias]
We define the tuned score function for an arbitrary prompt $C$ as the ideal base prior altered by a condition-dependent activation of the source attractor bias:
\begin{align}
\label{eq:bias}
s_{\theta^*}(x_t, C) = s_{\text{base}}(x_t, C) + \alpha(C) \nabla_{x_t} B(x_t)
\end{align}
where $\alpha(C) \in [0, 1]$ parameterizes the activation strength.
\end{definition}

\begin{proposition}[Semantic Proportionality of Activation]
The activation coefficient $\alpha(C)$ is directly correlated to the semantic similarity between the query prompt $C$ and the tuned source prompt $C_{\text{src}}$ within the text embedding space.
\end{proposition}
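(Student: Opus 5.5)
The argument traces how the source attractor bias enters the tuned network and reads $\alpha(C)$ off the network's first-order response to a change of prompt. Write the tuned parameters as $\theta^* = \theta_{\text{base}} + \Delta\theta$. We work in the regime where $\Delta\theta$ is small relative to the pretrained weights (LoRA, 100 steps) and treat the change as a first-order perturbation. The text condition reaches the velocity/score network only through its embedding $e(C)\in\mathbb{R}^k$, via cross-attention, so we write $s_\theta(x_t,C)=F_\theta(x_t,e(C))$. Linearizing in $\Delta\theta$ gives
\begin{align}
s_{\theta^*}(x_t,C) - s_{\text{base}}(x_t,C) \approx \nabla_\theta F_{\theta_{\text{base}}}(x_t,e(C))\,\Delta\theta =: D(x_t,e(C)).
\end{align}
By the Gradient Bias Field proposition, at $C=C_{\text{src}}$ this difference equals $\nabla_{x_t}B(x_t)$. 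So $D(x_t,e(C_{\text{src}}))=\nabla_{x_t}B(x_t)$, and this fixes the normalization $\alpha(C_{\text{src}})=1$.

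Next we use the Condition-Dependent TTT Bias definition, \cref{eq:bias}, and identify $\alpha(C)$ as the projection coefficient
\begin{align}
\alpha(C)=\frac{\langle D(\cdot,e(C)),\,\nabla B\rangle}{\|\nabla B\|^2},
\end{align}
with the inner product taken in $L^2$ over the noisy latents $x_t$. Clipping to $[0,1]$ is harmless. We then Taylor-expand $D$ in the embedding argument around $e_{\text{src}}=e(C_{\text{src}})$:
\begin{align}
D(x_t,e(C)) = \nabla B(x_t) + \partial_e D(x_t,e_{\text{src}})\,(e(C)-e_{\text{src}}) + O(\|e(C)-e_{\text{src}}\|^2).
\end{align}
This yields
\begin{align}
\alpha(C)=1-\langle g, e(C)-e_{\text{src}}\rangle + O(\|e(C)-e_{\text{src}}\|^2)
\end{align}
for a fixed vector $g$. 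Equivalently, by Cauchy--Schwarz, $|1-\alpha(C)|\le L\,\|e(C)-e_{\text{src}}\|$, where $L$ is a Lipschitz constant of the embedding-to-score map. For normalized embeddings, $\|e(C)-e_{\text{src}}\|^2 = 2(1-\cos(e(C),e_{\text{src}}))$. Hence $\alpha(C)\to 1$ as the cosine similarity to $C_{\text{src}}$ goes to $1$, and $\alpha$ is a monotone function of the distance to the source prompt. This is the claimed correlation.

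For the opposite end of the range we use the cross-attention structure. The LoRA update acts on projections of $e(C)$, and attention aggregates token values via softmax weights $\propto\exp(q^\top W_k e)$. A prompt whose tokens share no directions with the source tokens, i.e.\ $e(C)$ nearly orthogonal to the subspace that $\Delta W$ was trained on, produces a small $\Delta W\,e(C)$ and hence small $D$. So $\alpha(C)\approx 0$ for semantically unrelated prompts, and the bias concentrates near $C_{\text{src}}$. This step needs $\Delta\theta$ to act mainly through the text-conditioned key/value pathways, which matches TTT being trained only on $C_{\text{src}}$.

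The main obstacle is that the statement as written is informal: ``directly correlated'' is not a precise quantitative claim. In addition, the linearization and Lipschitz assumptions are not guaranteed for a collapsed network, where $\nabla B$ is nearly singular. I would therefore state the result as the local Lipschitz bound $|1-\alpha(C)|\le L\|e(C)-e_{\text{src}}\|$, together with the orthogonality limit. I would also state explicitly that the argument is heuristic under a smoothness assumption on the embedding-to-score map, rather than claim a strict monotone relationship.
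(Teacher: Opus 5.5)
You take a different route from the paper, which derives nothing here. The paper posits the approximation $\alpha(C)\approx\mathrm{sim}(C,C_{\text{src}})$ (cosine similarity of embeddings), arguing that the tuned conditioning projections fire according to embedding alignment. It then reads off the three regimes $\alpha(C_{\text{src}})\approx 1$, $0<\alpha(C_{\text{trg}})<1$, $\alpha(C_{\text{neg}})\approx 0$. You try to derive the dependence by linearizing in $\Delta\theta$ and Taylor-expanding in the embedding, but the derivation does not reach the conclusion you attach to it.

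\textbf{The correlation step.} ``Hence $\alpha$ is a monotone function of the distance to the source prompt'' does not follow. The bound $|1-\alpha(C)|\le L\|e(C)-e_{\text{src}}\|$ is only continuity at $C_{\text{src}}$. Your own expansion $1-\alpha(C)\approx\langle g,e(C)-e_{\text{src}}\rangle$ depends on the direction of the displacement, not its length. So prompts with equal cosine similarity to $C_{\text{src}}$ generically receive different $\alpha$, and along $-g$ the unclipped coefficient exceeds $1$.

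Clipping is therefore not harmless. It produces a plateau $\alpha\equiv 1$ on, locally, a half-space of prompts around $e_{\text{src}}$. There $\beta=\alpha(C_{\text{src}})-\alpha(C_{\text{trg}})=0$, which the Contrastive Cancellation argument cannot tolerate.

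If instead you require $\alpha\le 1$ with its maximum at $e_{\text{src}}$, the tangential first-order term must vanish, and the leading behaviour is a quadratic form in $e(C)-e_{\text{src}}$. On normalized embeddings this form is a function of $1-\cos(e(C),e_{\text{src}})=\tfrac12\|e(C)-e_{\text{src}}\|^2$ alone only if it is isotropic. That isotropy assumption is the missing ingredient, and it is exactly what the paper builds in by fiat when it writes $\alpha\approx\mathrm{sim}$. Your bound also gives $\alpha(C_{\text{trg}})>0$ only when $\|e_{\text{trg}}-e_{\text{src}}\|<1/L$, which nothing guarantees for a genuine edit.

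\textbf{The orthogonal limit.} This step has an independent gap. It needs $\Delta\theta$ to act only through the text key/value maps, with updates supported on the span of the source tokens. ``TTT being trained only on $C_{\text{src}}$'' does not imply this.
\begin{itemize}
\item Adapters on self-attention, feed-forward, or query projections act on the hidden state regardless of how $e(C)$ is aligned. Their contribution to $D$ therefore does not vanish for a prompt orthogonal to $C_{\text{src}}$.
\item The paper's own account of Conditioning Collapse (the model bypasses $C_{\text{src}}$ and hard-codes the source into its weights) describes exactly such a prompt-agnostic mechanism. That would push $\alpha$ toward $1$ for every $C$, not toward $0$ for unrelated prompts.
\item Even within cross-attention, the span argument fails for LoRA. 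With $\Delta W=s\,B_tA_t$ and $A_0$ randomly initialized, only the accumulated update $A_t-A_0$ has rows in the span of the tuning inputs. So for $e$ orthogonal to them, $\Delta W e=s\,B_tA_0e\neq 0$, generically of the same size as the response to source tokens. AdamW's coordinatewise rescaling breaks the span property anyway.
\item Contextual text-encoder embeddings are strongly anisotropic, so ``semantically unrelated'' is not the same as ``nearly orthogonal.''
\end{itemize}

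Your closing restatement --- a local Lipschitz bound at $C_{\text{src}}$ plus a heuristic orthogonal limit under explicit structural assumptions --- is an honest weaker claim. It is not the proposition, though. In particular it does not supply the intermediate regime $0<\alpha(C_{\text{trg}})<\alpha(C_{\text{src}})$ that the Standard-CFG and Contrastive-Cancellation arguments rely on.
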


\noindent \textbf{Justification.} During TTT, conditioning projections are updated specifically for the embedding of $C_{\text{src}}$. For a novel prompt $C$, the triggering of these updated weights depends on embedding alignment, well-approximated by cosine similarity: $\alpha(C) \approx \text{sim}(C, C_{\text{src}})$. This establishes three distinct activation regimes:
\textbf{Source Prompt ($C_{\text{src}}$):} Similarity is maximized ($\text{sim} = 1$). Thus $\alpha(C_{\text{src}}) \approx 1$, fully activating the gravity well.\\
\textbf{Target Prompt ($C_{\text{trg}}$):} Target prompts share significant semantic overlap with the source (e.g., shared subjects). Hence, $0 < \alpha(C_{\text{trg}}) < 1$. This strict positivity inadvertently triggers a partial collapse into the TTT gravity well.\\
\textbf{Negative Prompt ($C_{\text{neg}}$):} Generic degradation prompts are semantically orthogonal to the source. Thus $\alpha(C_{\text{neg}}) \approx 0$, successfully bypassing the overfitted parameters.

\subsubsection{Active Cancellation via Contrastive CFG}

To escape this gravity well and overcome the distributional shift, our Contrastive CFG defines the modified score as:
\begin{align}
\label{eq:dual_cfg}
s_{\text{ContraCFG}}(x_t) &= s_{\theta^*}(x_t, C_{\text{neg}}) + \lambda_1 \left( s_{\theta^*}(x_t, C_{\text{trg}}) - s_{\theta^*}(x_t, C_{\text{neg}}) \right) \nonumber\\
&\quad+ \lambda_2 \left( s_{\theta^*}(x_t, C_{\text{trg}}) - s_{\theta^*}(x_t, C_{\text{src}}) \right)
\end{align}

\begin{proposition}[Standard CFG Amplifies TTT Bias]
Standard Classifier-Free Guidance applied post-TTT inherently amplifies the source memorization, restricting semantic editing.
\end{proposition}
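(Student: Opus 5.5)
The plan is to substitute the condition-dependent bias model of \cref{eq:bias} into the standard CFG score and collect terms. Standard CFG after TTT is the special case $\lambda_2 = 0$ of \cref{eq:dual_cfg}:
\begin{align}
s_{\text{CFG}}(x_t) = s_{\theta^*}(x_t, C_{\text{neg}}) + \lambda_1 \left( s_{\theta^*}(x_t, C_{\text{trg}}) - s_{\theta^*}(x_t, C_{\text{neg}}) \right).
\end{align}
Writing $s_{\theta^*}(x_t, C) = s_{\text{base}}(x_t, C) + \alpha(C)\nabla_{x_t} B(x_t)$ for $C \in \{C_{\text{trg}}, C_{\text{neg}}\}$ and grouping base terms and bias terms gives
\begin{align}
s_{\text{CFG}}(x_t) = s^{\text{base}}_{\text{CFG}}(x_t) + \left[ (1-\lambda_1)\alpha(C_{\text{neg}}) + \lambda_1 \alpha(C_{\text{trg}}) \right] \nabla_{x_t} B(x_t).
\end{align}
Here $s^{\text{base}}_{\text{CFG}}$ is the standard CFG score that the untuned base model would produce.

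Next I would read off the effective bias coefficient using the activation regimes from the Semantic Proportionality proposition. Taking $\alpha(C_{\text{neg}}) \approx 0$, the coefficient reduces to $\lambda_1 \alpha(C_{\text{trg}})$. Since $0 < \alpha(C_{\text{trg}}) < 1$ and guidance scales satisfy $\lambda_1 > 1$ (e.g. $\lambda_1 = 6$), this coefficient is strictly larger than $\alpha(C_{\text{trg}})$, the bias carried by the unguided conditional score $s_{\theta^*}(x_t, C_{\text{trg}})$. The coefficient can even exceed $1$, which is the full-strength bias at $C_{\text{src}}$, once $\lambda_1 > 1/\alpha(C_{\text{trg}})$.

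Standard CFG therefore multiplies the source attractor bias $\nabla_{x_t} B$ by a factor $\lambda_1$. This pulls trajectories toward $x_{\text{src}}$ more strongly than plain conditional sampling does, and so restricts semantic editing. To make the claim robust I would also keep a small nonzero $\alpha(C_{\text{neg}})$. The coefficient then equals $\alpha(C_{\text{neg}}) + \lambda_1(\alpha(C_{\text{trg}}) - \alpha(C_{\text{neg}}))$, which still exceeds $\alpha(C_{\text{trg}})$ whenever $\alpha(C_{\text{trg}}) > \alpha(C_{\text{neg}})$ and $\lambda_1 > 1$. The proposition is thus stated under the monotonicity implied by the similarity ordering.

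The main obstacle is that the statement is only as rigorous as the bias model. The additive decomposition in \cref{eq:bias} and the approximations $\alpha(C) \approx \mathrm{sim}(C, C_{\text{src}})$ are modeling assumptions rather than derived facts. I would make these explicit as hypotheses: $\lambda_1 > 1$ and $\alpha(C_{\text{trg}}) > \alpha(C_{\text{neg}})$. Under them, "amplifies" becomes a precise inequality on the bias coefficient. By contrast, the $\lambda_2$ term in \cref{eq:dual_cfg} contributes $\lambda_2(\alpha(C_{\text{trg}}) - 1) < 0$, which sets up the subsequent cancellation result.
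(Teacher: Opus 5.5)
Your proposal is correct and follows the paper's own route. Like the paper, you substitute the condition-dependent bias model into the standard CFG score, group the base and bias terms, and set $\alpha(C_{\text{neg}})\approx 0$ to get the amplified coefficient $\lambda_1\alpha(C_{\text{trg}})$. Your comparison with the unguided coefficient, via $(\lambda_1-1)\bigl(\alpha(C_{\text{trg}})-\alpha(C_{\text{neg}})\bigr)>0$, is a slightly sharper and more general form of the paper's appeal to $\lambda_1\gg 1$, not a different argument.
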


\begin{proof}
Substituting~\cref{eq:bias} into the standard CFG formula yields:
\begin{align}
s_{\text{StdCFG}}(x_t) 
% Step 1: The standard CFG formula
&= s_{\theta^*}(x_t, C_{\text{neg}}) + \lambda_1 \left( s_{\theta^*}(x_t, C_{\text{trg}}) - s_{\theta^*}(x_t, C_{\text{neg}}) \right) \nonumber\\
% Step 3: Group the ideal base scores together and factor out \nabla B
&= \underbrace{ \Big\{ s_{\text{base}}(x_t, C_{\text{neg}}) + \lambda_1 \big( s_{\text{base}}(x_t, C_{\text{trg}}) - s_{\text{base}}(x_t, C_{\text{neg}}) \big) \Big\} }_{ s_{\text{base, StdCFG}}(x_t) }  \nonumber\\
&\quad + \Big\{ \alpha(C_{\text{neg}}) + \lambda_1 \big( \alpha(C_{\text{trg}}) - \alpha(C_{\text{neg}}) \big) \Big\} \nabla_{x_t} B(x_t)  \nonumber\\
% Step 4: Apply the fact that the negative prompt does not trigger the TTT bias (alpha(C_neg) = 0)
&\approx s_{\text{base, StdCFG}}(x_t) + \Big\{ 0 + \lambda_1 \big( \alpha(C_{\text{trg}}) - 0 \big) \Big\} \nabla_{x_t} B(x_t)  \nonumber\\
% Step 5: Final simplified result showing the amplified bias
&= s_{\text{base, StdCFG}}(x_t) + \lambda_1 \alpha(C_{\text{trg}}) \nabla_{x_t} B(x_t)
\end{align}

Since $\alpha(C_{\text{neg}}) \approx 0$, the bias simplifies to $\lambda_1 \alpha(C_{\text{trg}}) \nabla_{x_t} B(x_t)$. Because $\lambda_1 \gg 1$, standard CFG aggressively scales the residual bias, misleading the generative trajectory toward the direction of the source attractor bias.
\end{proof}

\begin{proposition}[Contrastive Cancellation of TTT Bias]
The contrastive guidance term injects a precisely scaled negative gradient that algebraically cancels the amplified TTT bias.
\end{proposition}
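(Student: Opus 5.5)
The plan is to mirror the computation used for the proposition on standard CFG amplifying the TTT bias. We substitute the condition-dependent decomposition \cref{eq:bias} into all three score evaluations of the Contrastive CFG formula \cref{eq:dual_cfg}. Concretely, each of $s_{\theta^*}(x_t, C_{\text{neg}})$, $s_{\theta^*}(x_t, C_{\text{trg}})$ and $s_{\theta^*}(x_t, C_{\text{src}})$ is replaced by $s_{\text{base}}(x_t,\cdot)+\alpha(\cdot)\nabla_{x_t}B(x_t)$. Since \cref{eq:dual_cfg} is affine in the scores, the result splits cleanly into two pieces:
\begin{itemize}
\item a base-score part, namely the contrastive CFG applied to $s_{\text{base}}$, denoted $s_{\text{base, ContraCFG}}(x_t)$;
\item a scalar multiple of the source attractor bias, with coefficient
\begin{align}
\alpha(C_{\text{neg}}) + \lambda_1\big(\alpha(C_{\text{trg}})-\alpha(C_{\text{neg}})\big) + \lambda_2\big(\alpha(C_{\text{trg}})-\alpha(C_{\text{src}})\big).
\end{align}
\end{itemize}

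Next we invoke the three activation regimes from the Semantic Proportionality proposition: $\alpha(C_{\text{neg}})\approx 0$ and $\alpha(C_{\text{src}})\approx 1$. The coefficient then reduces to
\begin{align}
\lambda_1\alpha(C_{\text{trg}}) - \lambda_2\big(1-\alpha(C_{\text{trg}})\big) = (\lambda_1+\lambda_2)\,\alpha(C_{\text{trg}}) - \lambda_2 .
\end{align}
Compared with the standard CFG result $\lambda_1\alpha(C_{\text{trg}})$, the new term $-\lambda_2(1-\alpha(C_{\text{trg}}))$ is strictly negative whenever $0<\alpha(C_{\text{trg}})<1$. This is the ``precisely scaled negative gradient'' in the statement, and it points away from $x_{\text{src}}$.

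To justify ``cancels'', we solve for the guidance strength that zeroes the residual bias. This gives
\begin{align}
\lambda_2^\star = \frac{\lambda_1\,\alpha(C_{\text{trg}})}{1-\alpha(C_{\text{trg}})}.
\end{align}
With this choice the guided score equals $s_{\text{base, ContraCFG}}(x_t)$ exactly, with no $\nabla_{x_t}B$ term. The $\lambda_2$ term in the base part is $\lambda_2\big(s_{\text{base}}(x_t,C_{\text{trg}})-s_{\text{base}}(x_t,C_{\text{src}})\big)$. This is a semantic difference direction of the untuned prior, so it is a legitimate editing signal rather than residual memorization. Choosing $\lambda_2>\lambda_2^\star$ flips the sign and gives an active repulsion from the source attractor. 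This matches the toy experiment in \cref{fig:toy}(H).

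The main obstacle is that the cancellation is only exact in a single direction: it depends on the idealized additive model \cref{eq:bias}, with one shared bias field $\nabla_{x_t}B$ scaled by a scalar $\alpha(C)$, and on the approximations $\alpha(C_{\text{neg}})\approx0$ and $\alpha(C_{\text{src}})\approx1$. I would state these as standing assumptions and present exact cancellation as holding at $\lambda_2=\lambda_2^\star$. For general $\lambda_2$, I would add a bound: the residual bias magnitude is $|(\lambda_1+\lambda_2)\alpha(C_{\text{trg}})-\lambda_2|\,\|\nabla_{x_t}B\|$, which is strictly smaller than the standard-CFG magnitude $\lambda_1\alpha(C_{\text{trg}})\|\nabla_{x_t}B\|$ whenever $0<\lambda_2<2\lambda_2^\star$. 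This shows the improvement is robust to imperfect tuning of $\lambda_2$.
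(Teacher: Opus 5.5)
Your proposal is correct and follows the paper's proof: you substitute the additive bias model into the Contrastive CFG formula, use $\alpha(C_{\text{neg}})\approx 0$, read off the net coefficient on $\nabla_{x_t}B(x_t)$, and zero it by choosing $\lambda_2$. The only difference is that the paper keeps $\beta=\alpha(C_{\text{src}})-\alpha(C_{\text{trg}})>0$ general, giving net coefficient $\lambda_1\alpha(C_{\text{trg}})-\lambda_2\beta$, rather than setting $\alpha(C_{\text{src}})\approx 1$; it also leaves the zeroing value of $\lambda_2$ implicit and gives no robustness bound, so your explicit $\lambda_2^\star$ and bound are correct additions that carry over with $\beta$ in place of $1-\alpha(C_{\text{trg}})$, namely $\lambda_2^\star=\lambda_1\alpha(C_{\text{trg}})/\beta$ and $0<\lambda_2<2\lambda_2^\star$.
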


\begin{proof}
Evaluating the contrastive guidance term (scaled by $\lambda_2$):
\begin{align}
s_{\theta^*}(x_t, C_{\text{trg}}) - s_{\theta^*}(x_t, C_{\text{src}}) &= \left( s_{\text{base}}(x_t, C_{\text{trg}}) - s_{\text{base}}(x_t, C_{\text{src}}) \right) \nonumber \\
&\quad + \left( \alpha(C_{\text{trg}}) - \alpha(C_{\text{src}}) \right) \nabla_{x_t} B(x_t)
\end{align}
Because TTT strictly overfits $C_{\text{src}}$, we have $\alpha(C_{\text{src}}) > \alpha(C_{\text{trg}})$. Consequently, the bias scale is strictly negative. Let $\beta = \alpha(C_{\text{src}}) - \alpha(C_{\text{trg}}) > 0$, yielding:
\begin{align}
s_{\theta^*}(x_t, C_{\text{trg}}) - s_{\theta^*}(x_t, C_{\text{src}}) = \left( s_{\text{base}}(x_t, C_{\text{trg}}) - s_{\text{base}}(x_t, C_{\text{src}}) \right) - \beta \nabla_{x_t} B(x_t)
\end{align}
Substituting this directly back into the Contrastive CFG equation (~\cref{eq:dual_cfg}):
\begin{align}
s_{\text{ContraCFG}}(x_t) = s_{\text{base, ContraCFG}}(x_t) + \left[ \lambda_1 \alpha(C_{\text{trg}}) - \lambda_2 \beta \right] \nabla_{x_t} B(x_t)
\end{align}
\end{proof}

While standard CFG burdens generation with a heavily amplified source-attractor bias, the contrastive term explicitly isolates the geometry of the TTT gravity well and subtracts it with magnitude $\lambda_2 \beta$. Through appropriate scaling of $\lambda_2$, the net bias coefficient $[\lambda_1 \alpha(C_{\text{trg}}) - \lambda_2 \beta]$ evaluates to zero, achieving exact cancellation of the TTT distributional shift, fully restoring semantic editing capabilities without degrading the structural fidelity

\section{Details on Experiment Setup}
\label{exp}
\subsection{Implementation Details of Baseline Models}

In our main experiment, we evaluate our proposed method, ElasticTTT, against recent state-of-the-art video editing approaches. To comprehensively assess performance, we categorize our baselines into two groups: those utilizing their official open-source configurations and those we re-implemented to guarantee a standardized comparison.

For several previous models—specifically AnyV2V~\cite{ku2024anyvv}, Ground-A-Video~\cite{jeong2024groundavideo}, Token-Flow~\cite{geyer2024tokenflow}, Ditto~\cite{bai2025ditto}, and UniEdit~\cite{bai2025uniedit}—we directly obtain results using their official open-source code and model weights. We strictly adhere to the default hyperparameters provided in their official implementations, keeping the output video lengths and spatial resolutions at their original settings without modification. Regarding AnyV2V, the framework necessitates a dedicated image editing model to generate first-frame guidance; to maintain consistency with its original paper, we integrate InstructPix2Pix as the designated editor. However, a significant limitation of relying on these out-of-the-box methods is that their foundational settings, such as the chosen base models, vary drastically. This discrepancy makes direct comparisons between them and our method less conclusive.

To address this limitation and enable a truly fair comparison, we adopt select state-of-the-art training-free and test-time tuning editing methodologies and re-implement them on identical configurations to ElasticTTT. Specifically, we re-implement Flow-Align~\cite{kim2025flowalign} to serve as a highly competitive training-free baseline that shares our exact experimental settings. Flow-Align~\cite{kim2025flowalign} originally was a SOTA training-free image editing framework, where the methodology is model-independent and can be directly adapted to new architectures.

Although Tune-A-Video~\cite{wu2023tune} and VidTTA~\cite{wang2025low} were originally designed for 3D-UNet architectures, their core contributions are architecture-agnostic and can be readily adapted to Diffusion Transformer (DiT) frameworks. To ensure an equitable evaluation, we port their primary mechanisms into the Wan2.1-1.3B~\cite{wan2025wan} architecture. Furthermore, we align all key hyperparameters for these re-implemented models, such as the number of test-time tuning steps, learning rate, with those used in ElasticTTT to ensure fair comparison.

For all other evaluated baseline models, we strictly adhered to the default hyper-parameters provided in their official implementations.

\subsection{Implementation Details of ElasticTTT}

Regarding our training configuration, we align our primary test-time tuning phase with the protocol established by LoRA-Edit~\cite{gao2025lora}, specifically fixing the number of optimization steps to 100. We utilized the AdamW~\cite{kingma2014adam,loshchilov2017decoupled} optimizer with a learning rate set to 3e-5. We set the shift of the diffusion noise schedule to 3.0, and utilize 50 sampling steps during inference with an Euler sampler. All generated videos have 81 frames with 480p resolution, aligning with common protocol~\cite{wan2025wan}.

In our ablation study, the video samples demonstrating target distribution regularization were generated using 300 test-time tuning (TTT) steps, as we observed that this regularization mechanism performs more effectively with extended optimization. Conversely, the visual examples illustrating the contrastive CFG and AsyncNC ablation were obtained using 70 TTT steps. For the main visual comparisons against other state-of-the-art methods, all hyperparameters are kept identical to those used in our quantitative evaluation.

The methodology related hyperparameters introduced Sec.4.1 are determined empirically based on qualitative observations from a minimal validation set comprising only five video samples. Because we deliberately bypassed an exhaustive grid search or systematic hyperparameter optimization to save computational resources, the current configuration may be suboptimal. Consequently, we anticipate that our method possesses further headroom for performance improvement, which could be unlocked through a more rigorous hyperparameter search.

To further characterize how these hyperparameters affect performance, we evaluate the model across a diverse range of settings on our test set, as summarized in \cref{tab:stacked_result}, with the default configuration shown in \textcolor{red}{red}.

Overall, ElasticTTT exhibits strong robustness to the TDR and CFG scales. Varying the TDR scale from 0.1 to 0.3 changes the overall score by less than 0.05 (6.68--6.73), while any non-zero setting outperforms disabling TDR entirely (OVL 6.59), confirming that the benefit stems from the regularization itself rather than a delicately tuned variance. Similarly, all tested CFG scale combinations stay within a narrow band (6.63--6.75), indicating that the contrastive guidance is effective across a wide range of strengths. Notably, two non-default settings (TDR scale 0.3 with OVL 6.73, and CFG scale 6-3 with OVL 6.75) slightly surpass our default configuration, which corroborates the aforementioned headroom left by our deliberately coarse hyperparameter selection.

\begin{table}[htbp]
    \centering
  
    \caption{Overall scores (OVL) under different hyperparameter settings. The default configuration is shown in \textcolor{red}{red}.}
    \label{tab:stacked_result}
    \begin{tabular}{lcccc}
        \toprule
        \textbf{TDR scale} & 0 & 0.1 & \textcolor{red}{0.2} & 0.3 \\
        \textbf{OVL}       & 6.59 & 6.69 & \textcolor{red}{6.68} & 6.73 \\
        
        \midrule % 【关键修改 3】用单根中线替代原来的 \bottomrule + \addlinespace，大幅节省垂直空间
        
        \textbf{CFG scale} & 5-3 & 6-3 & \textcolor{red}{6-2} & 6-1 \\
        \textbf{OVL}       & 6.69 & 6.75 &\textcolor{red}{6.68} & 6.63 \\
        
        \midrule % 同上
        
        \textbf{Async-NS}  & 0.95-0.5 & 0.97-0.6 & \textcolor{red}{0.97-0.55} & 0.95-0.55 \\
        \textbf{OVL}       & 4.59 & 4.71 & \textcolor{red}{6.68} & 4.62 \\
        \bottomrule
    \end{tabular}
   % 【关键修改 4】增加表格下方的负间距，让正文更贴近表格
\end{table}

In contrast, the Async-NS noise regimes $(T_\text{e}, T_\text{p})$ constitute the most sensitive hyperparameter: deviating from the default $(0.97, 0.55)$ causes the overall score to drop sharply from 6.68 to around 4.6. This is expected, as these two thresholds directly define the desynchronization between the edited and preserved regions---an overly high $T_\text{p}$ injects excessive noise into the regions to be preserved and corrupts the source structure, whereas a lower $T_\text{e}$ provides insufficient noise for the edited regions to break away from the memorized source content.

\section{Details on Evaluation Metrics}
\label{eval}
\label{app:eval}

\begin{table}[!htbp]
  \caption{The elaborated results of six metrics on VBench~\cite{huang2024vbench}
  }
  \label{tab:Vbench detail score}
  \centering
    \setlength{\tabcolsep}{6pt} % increase column spacing
    \renewcommand{\arraystretch}{1.15} % increase row spacing
  \begin{tabular}{lcccccc}
    \toprule
    Methods &SC$\uparrow$ &BC$\uparrow$& MS$\uparrow$&DD$\uparrow$& AQ$\uparrow$&IQ$\uparrow$  \\
    \midrule
    \multicolumn{7}{l}{\textit{Other Methods}} \\     
     Flow-align & 0.900&0.910& 0.971&0.960&0.519&67.056\\
     Ditto &  0.927&0.924 &0.977&0.760& 0.574&71.862\\
      Uniedit & 0.957& 0.970&0.983&0.352&0.498&55.033\\
     AnyV2V & 0.850 &0.906 &0.937&0.928&0.526&65.215\\
     Ground-A-Video & 0.871 &0.918 &0.872&0.960&0.561&71.065\\
     Token-flow &0.936 &0.944 &0.969 &0.752&0.523&70.486\\
    \midrule
    \multicolumn{7}{l}{\textit{Test-Time-Tuning Methods}} \\ 
     Tune-A-Video&0.937&0.924&0.973&0.960&0.530&67.886\\
     VidTTA&0.931&0.917&0.972&0.960&0.526&67.900\\
     ElasticTTT&0.934& 0.920&0.971&0.969&0.534&66.554  \\
    
  \bottomrule
  \end{tabular}
\end{table}

\subsection{Automatic Metrics}

We provide a detailed introduction to the automatic evaluation protocol adopted.

\emph{CLIP-T}.\ To quantify the alignment between the generated visual content and the target text instructions, we measure the CLIP~\cite{radford2021learning} text-image similarity (CLIP-T). This metric leverages the pre-trained CLIP~\cite{radford2021learning} model to map both text and images into a shared latent space. For our evaluation, we uniformly sample 8 frames from each video sequence. The cross-modal similarity score is computed independently by calculating the cosine similarity between the embedding of each individual frame and the embedding of the edited prompt. The final reported CLIP-T value is the arithmetic mean of these 8 independent frame-level similarity scores.

\emph{VEBench}. \ VEBench~\cite{liu2025vebench} is a subjective-aligned benchmark suite specifically designed for text-driven video editing quality assessment. Traditional objective metrics often struggle to align with human perception; to overcome this, VEBench utilizes a dedicated multi-modal quality assessment network (VE-Bench QA). Rather than relying on generic vision-language models, this specialized network evaluates edited videos by simultaneously modeling three critical components: the inner connection and relevance between the source and edited videos (source-target relationship), the alignment between the edited video and the driving text prompt, and overall visual quality indicators such as aesthetics and distortion. To ensure strict reproducibility during our VEBench evaluation process, we fixed the random seed to 666.

\emph{VBench}. \ VBench~\cite{huang2024vbench} is a comprehensive evaluation suite that dissects video generative performance into 16 hierarchical, disentangled, and fine-grained dimensions (such as temporal flickering, motion smoothness, subject consistency, and aesthetic quality). This allows for a highly objective and human-aligned assessment of video generation capabilities. In our experiments, we select six core aspects that are closely related to the editing task, and are directly measurable on data other than the standard VBench testset.
Specifically, we calculate Subject Consistency (SC), Background Consistency (BC), Motion Smoothmess (MS), Dynamic Degree (DD), Aesthetic Quality (AQ), and Image Quality (IQ).
Because the scoring scales and computational methodologies across these disentangled dimensions inherently vary, we systematically normalize the raw values of these six aspects. These normalized values are then summed to compute a single, unified overall editing score. A detailed breakdown of the specific scores across all evaluated VBench dimensions is provided in~\cref{tab:Vbench detail score}.

\subsection{Judge-Based Metrics}
The above automatic metrics provide an overall rating of the generated videos. However, these metrics focus on the general information, and overlook fine-grained video details, as well as the subtle semantics in the editing prompt.
To provide a nuanced, reasoning-based assessment of our editing results, we employ a Vision-Language Model (VLM) as an automated evaluator. Specifically, we utilize GPT-5~\cite{openai2025gpt5systemcard} to perform complex visual reasoning, allowing us to determine whether highly specific editing instructions were accurately executed. For this evaluation process, we uniformly sample six corresponding frames from both the original source video and the final edited video. These paired frame sequences are simultaneously fed into GPT-5. This uniform sampling strategy provides the VLM with a comprehensive temporal view of the spatial transformations between the unedited and edited states, without exceeding the model's context window.

The GPT-5 evaluation is systematically conducted across four distinct dimensions to ensure a thorough assessment. Instruction Alignment (IA) measures the accuracy with which the model executed the target textual edit, such as a local object swap or attribute modification. Source Preservation (SP) evaluates the model's ability to retain the unedited background, original subjects, and overall context of the source video without introducing unintended alterations. Visual Quality (VQ) assesses the perceptual fidelity of the edited frames, looking specifically for spatial artifacts or structural distortions introduced during the generative process. Finally, the Overall (OVL) dimension provides a holistic score that measures the overall editing performance, acting as a proxy for comprehensive human judgment. Notably, the VLM only assesses the video in one aspect per each call, guaranteeing that different evaluation dimensions are independently measured. The wordings of the custom reasoning prompts utilized for this VLM-based assessment are provided in~\cref{prompt}

\section{Details on Human Evaluation}
\label{human}
\begin{figure}[!htbp]
    \centering
    \includegraphics[width=1\linewidth]{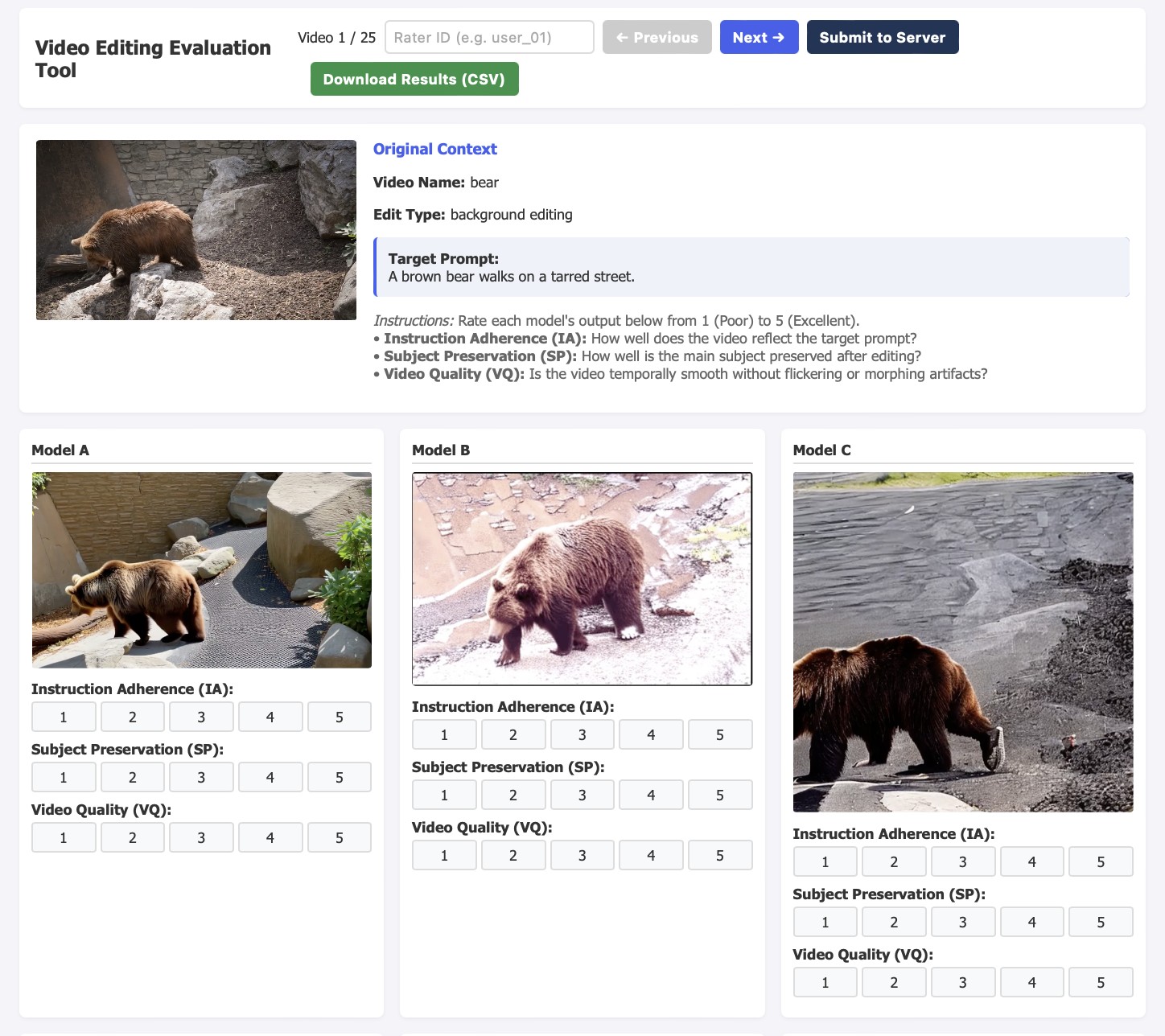}
    \caption{The user interface of our human evaluation website.}
    \label{fig:interface}
\end{figure}

\subsection{Evaluation Setup}
To conduct nuanced, human perceptual quality aligned evaluation, we conducted a comprehensive human evaluation study. To maintain a manageable cognitive load for our participants while ensuring diverse coverage, we randomly selected a testset of 25 unique video-prompt combinations.

We invite 10 independent human evaluators in this assessment. For each of the 25 evaluation instances, participants are presented with the source video, the driving text prompt, and the edited videos generated by the evaluated methods. To ensure a strictly blind evaluation and mitigate any potential cognitive or brand bias, all model identities are completely anonymized, and the generated outputs are randomly shuffled and assigned temporary identifiers ranging from Model A to Model G for each individual trial.

The evaluators are tasked with rating each generated video on a range from 1 to 5, across three critical perceptual dimensions. Video Quality (VQ) assessed the overall temporal smoothness, aesthetic appeal, and absence of generative artifacts. Instruction Adherence (IA) measures how accurately the edited video reflected the specific semantic changes requested in the text prompt. Finally, Source Preservation (SP) evaluates the model's ability to retain the unedited background, subject identity, and structural layout of the original source video. A custom user interface is designed specifically for this evaluation platform, which facilitated simultaneous side-by-side comparisons, is illustrated in~\cref{fig:interface}.

\begin{figure}[!htbp]
    \centering
    \includegraphics[width=1\linewidth]{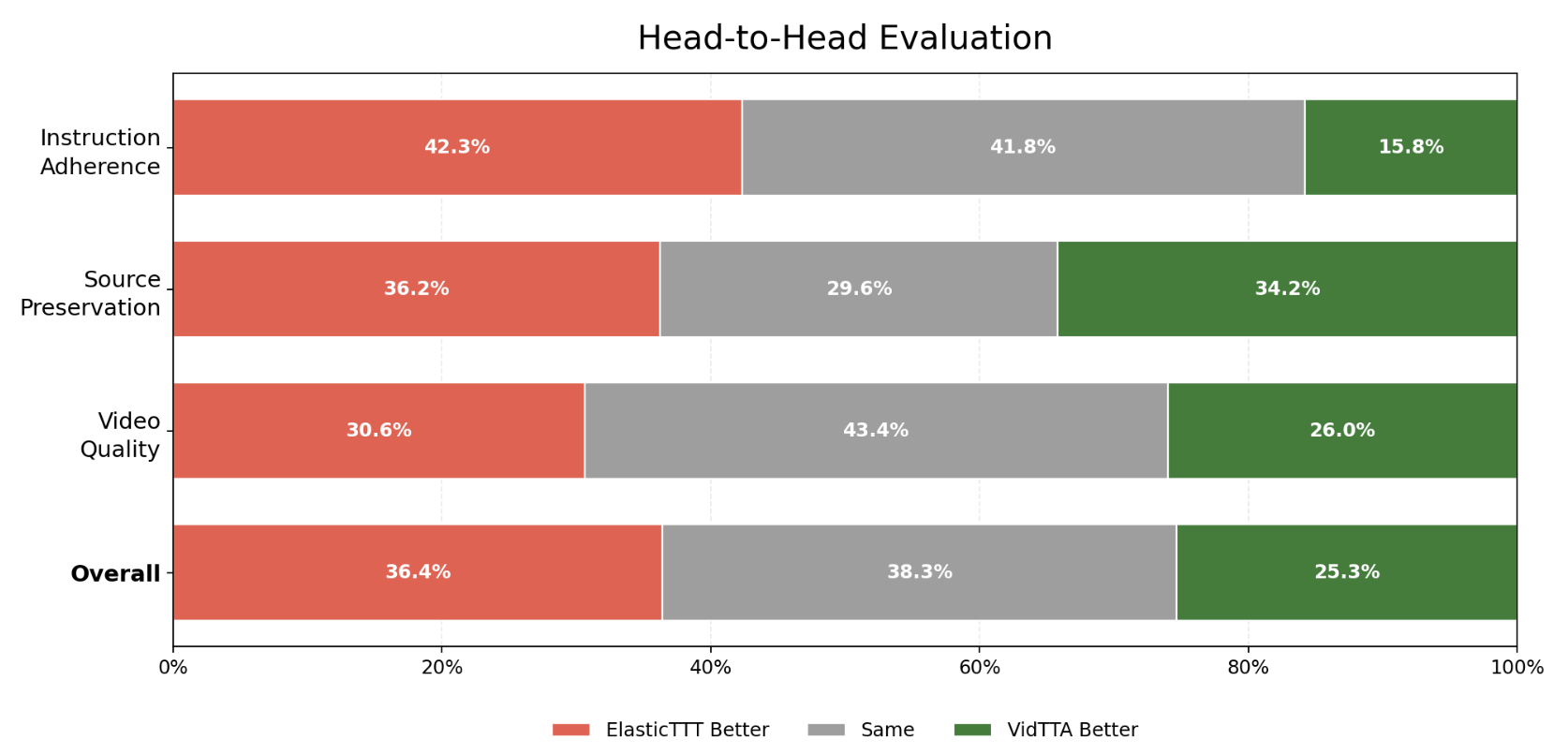}
    \caption{Head-to-head comparison results with the re-implemented VidTTA~\cite{wang2025low}.}
    \label{fig:head-to-head}
\end{figure}

\subsection{Head-to-head Human Evaluation with VidTTA}

As a highly competitive Test-Time Tuning (TTT) baseline that shares an identical foundational configuration with our proposed method, our re-implemented version of VidTTA~\cite{wang2025low} demonstrates remarkably strong performance on standard video editing tasks. Because the core optimization mechanisms of VidTTA closely parallel those of ElasticTTT, and the generation results demonstrate similar patterns, we deliberately exclude it from the broader multi-model human evaluation discussed previously, and explicitly conduct a \textit{head-to-head} human evaluation.

Evaluators are presented with a direct side-by-side comparison of videos generated by both ElasticTTT and VidTTA, conditioned on the same source video and driving text prompt. The positioning of the two models' outputs is entirely randomized for each trial, and their identities remained strictly anonymized. For every video pair, participants were asked to compare the generative outputs and cast a single vote indicating whether the first video or the second is better, or if both videos are of equal quality. These comparative judgments are made holistically, factoring in Video Quality, Instruction Adherence, and Source Preservation.

The quantitative results are illustrated in~\cref{fig:head-to-head}. The aggregated preference data clearly indicates that ElasticTTT consistently outperforms the VidTTA baseline across all evaluation metrics, securing a vastly higher win rate. Notably, ElasticTTT achieves a significant 42.3\% win rate over 15.8\% in Instruction Adherence, and 36.4\% over 25.3\% in Overall score. This significant margin confirms that ElasticTTT successfully overcomes the inherent limitations of test-time tuning, yielding visual outputs that are demonstrably preferred by human observers.

\begin{table}[!htbp]
  \caption{\textbf{Quantitative comparison with prior video editing approaches with 70 TTT steps} Best results are highlighted in \textcolor{red}{red}, and second-best in \textcolor{blue}{blue}. * refers that the method is re-implemented with our configurations.
  }
 
  \centering
  \label{tab:video_editing_comparison_70_0}
  \scriptsize
  \renewcommand{\arraystretch}{1.2}
  \setlength{\tabcolsep}{5pt}
  \begin{tabular}{lcccc}
    \toprule
    Methods &VQ$\uparrow$&IA$\uparrow$&SP$\uparrow$&OVL$\uparrow$ \\
    \midrule
    \multicolumn{5}{l}{\textit{Other Methods}} \\ 
     AnyV2V\cite{ku2024anyvv}  &3.51&5.89&2.30&4.31\\
     Ground-A-Video\cite{jeong2024groundavideo} & 3.57&4.49&2.42& 3.90\\
     Token-flow\cite{geyer2024tokenflow}  &4.47&5.57&4.30&4.83\\
    
     Ditto~\cite{bai2025ditto}&\textcolor{blue}{5.48}&5.95&3.32&\textcolor{blue}{5.62}\\
     Flow-align~\cite{kim2026flowalign}&5.28&5.44&\textcolor{red}{7.23}&5.44\\
     Uniedit~\cite{bai2025uniedit}  &3.87&\textcolor{blue}{6.51}&0.54&5.04\\
      \midrule
    \multicolumn{5}{l}{\textit{Test-Time-Tuning Methods}} \\ 
    Tune-A-Video*~\cite{wu2023tune}&5.75&6.67&3.62&5.80\\
    VidTTA*\cite{wang2025low}&5.16&6.06&3.86&5.27\\
    \rowcolor{yellow!30}ElasticTTT&\textcolor{red}{6.08}&\textcolor{red}{7.02}&\textcolor{blue}{4.68}&\textcolor{red}{6.68} \\

  \bottomrule
  \end{tabular}
    
\end{table}

\section{More Evaluation Results}
\label{more}
\subsection{Fewer Steps Evaluation}
Although we choose 100 TTT-steps as our final evaluation settings, we additionally evaluate ElasticTTT with 70 TTT-steps, comparing with the same group of baselines. Results are illustrated in~\cref{tab:video_editing_comparison_70_0}. Experiments have shown that 
our method can achieve state-of-the-art results but also able to maintain high level editing quality at smaller TTT steps.

\subsection{More TTT Steps with TDR}

Because the efficacy of Target Distribution Regularization (TDR) is deeply intertwined with the optimization dynamics of the Test-Time Tuning (TTT) process, we empirically evaluate the impact of TDR across varying optimization lengths. \cref{fig:linechart} illustrates the overall editing performance as a function of the number of TTT steps. As our theoretical analysis predicted, standard TTT exhibits a degradation in editing capability as the step count increases. In contrast, incorporating TDR consistently enhances overall editing performance across all evaluated TTT step counts, with especially significant improvement on large TTT steps, actively preventing the optimization process from collapsing.

\begin{figure}[!htbp]
    \centering
    \includegraphics[width=0.6\linewidth]{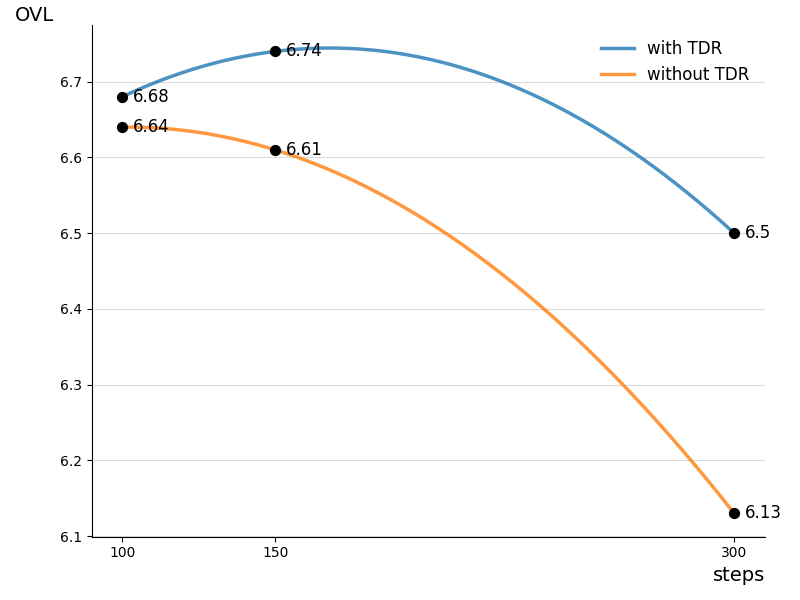}
   
    \caption{Comparison of Overall scores for different TTT steps, with and without TDR}
    \label{fig:linechart}
\end{figure}

To further quantify this phenomenon, we conduct an ablation study extending the optimization to an aggressive 300 TTT steps. As detailed in~\cref{tab:ablation-300}, removing TDR leads to a drastically compromised Video Quality (VQ), Instruction Adherence (IA) and Overall (OVL) score, demonstrating the severe conditioning collapse. The increase in Source Preservation illustrates the overfitting brought by TTT without the controlled stochasticity. The 300 steps experiment exhibits a largely more significant result than the ablation study in Sec. 4.3, as the model overfits step-by-step, exacerbating the prior collapse phenomenon gradually through training.
\begin{table}[!htbp]

  \caption{Quantitative results of TDR with 300 TTT steps. }
  
  \centering
  %   \scriptsize
  \renewcommand{\arraystretch}{1.2}
  \setlength{\tabcolsep}{5pt}
  %\vspace{-15pt}

  \label{tab:ablation-300}
  \begin{tabular}{lccccccc}
    \toprule
    Methods   & VQ$\uparrow$&IA$\uparrow$&SP$\uparrow$&OVL$\uparrow$\\
    \midrule
    w/o TDR  &6.09&6.28&\textcolor{red}{6.67}&6.13\\
  
    \rowcolor{yellow!30}Full (ElasticTTT) &\textcolor{red}{6.31}&\textcolor{red}{6.69}&6.38&\textcolor{red}{6.50} \\
    
  \bottomrule
  \end{tabular}
\end{table}

\subsection{Efficient Alignment-Editing Trade-off Via TDR}

In sensitive applications, particularly those involving human faces, source preservation is a paramount concern; even minor structural alterations can result in severely unnatural artifacts or loss of identity. While adjusting the TTT step count serves as an effective mechanism to control the degree of editing and source preservation, an inherent trade-off exists: enforcing stricter alignment with the reference video inevitably degrades semantic alignment and overall editing quality. Incorporating our TDR module significantly mitigates this issue, enabling robust reference alignment with minimal semantic sacrifice. Empirically, when increasing the TTT steps from 100 to 300, employing TDR yields a highly efficient trade-off: a 1-point drop in OVL translates to a substantial 5-point gain in SP. In contrast, without TDR, the identical semantic cost yields a mere 2.7-point improvement in SP.

\section{Correlation Analysis of VLM and Human Judgments} \label{sec:vlm and human correlation}
\label{corr}
%To prove the rationality of using the VLLM-based evaluation methd, we conduct two experiments to demonstrate the stability of the scores and the consistency of human judgment.
%Firstly, we prompts Chatgpt independently for five times to evaluate the stability of the score given by VLM. 

\begin{comment}
\begin{table}[!h]
  \caption{The stability test of VLM.
  }
  \label{tab:VLM stability}
  \centering
  \begin{tabular}{@{}llllll@{}}
    \toprule
    methods &Fisrt&Second& Third&fourth& fifth  \\

    \midrule
        Elastic TTT final score &6.68&6.61& 6.76&6.77& 6.57  \\

  \bottomrule
  \end{tabular}
\end{table}

\end{comment}

\begin{table}[!htbp]
  \caption{The $p$-values of two-sided Pearson correlation significance tests 
between human evaluation scores and VLM-given scores.  \textcolor{red}{Red} represent  \textbf{highly significant at the 0.01 level} and  \textcolor{blue}{blue} represent \textbf{statistically significant at the 0.05 level}.
  }
    \renewcommand{\arraystretch}{1.2}
  \setlength{\tabcolsep}{5pt}
  \label{tab:p value}
  \centering
  \begin{tabular}{@{}lllll@{}}
    \toprule
    methods &VQ&IA& SP&OVL  \\
    \midrule
      
     Flow-align~\cite{kim2025flowalign} & \textcolor{blue}{0.0172}&\textcolor{red}{0.0001}&\textcolor{red}{0.0005}&\textcolor{red}{0.0025}\\
     Ditto~\cite{bai2025ditto} &0.0747  &\textcolor{red}{0.0000} &0.0931&\textcolor{red}{0.0000}\\
      Uniedit~\cite{bai2025uniedit} &\textcolor{blue}{0.0116} &\textcolor{blue}{0.0126} &\textcolor{red}{0.0006}&\textcolor{blue}{0.0360}\\
     AnyV2V~\cite{ku2024anyvv} &0.0544  &\textcolor{red}{0.0005} &\textcolor{red}{0.0000}&\textcolor{blue}{0.0223}\\
     Ground-A-Video~\cite{jeong2024groundavideo} & 0.8595&\textcolor{red}{0.0000} &0.1202&0.3907\\
     Token-flow~\cite{geyer2024tokenflow} &\textcolor{red}{0.0070}&\textcolor{red}{0.0000} &\textcolor{blue}{0.0438} &\textcolor{red}{0.0000}\\

     ElasticTTT&\textcolor{red}{0.0099}&\textcolor{blue}{0.0220} &\textcolor{red}{0.0033}&\textcolor{red}{0.0038} \\
    
  \bottomrule
  \end{tabular}
\end{table}

\begin{table}[t]
\centering
\caption{Pearson correlation coefficient ($r$) between VLM-given
scores and human evaluation scores.
\colorbox{green!22}{\strut Strong} $r\geq0.5$;\;
\colorbox{orange!28}{\strut Moderate} $0.3\leq r<0.5$;\;
\colorbox{gray!18}{\strut Weak} $r<0.3$.}
  \renewcommand{\arraystretch}{1.2}
  \setlength{\tabcolsep}{5pt}
\label{tab:pearson_7models}
\begin{tabular}{lcccc}
\toprule
Methods & VQ & IA & SP & OVL \\
\midrule
  Flow-Align~\cite{kim2025flowalign}       & \cellcolor{orange!28}0.4041 & \cellcolor{green!22}0.7077 & \cellcolor{green!22}0.6310 & \cellcolor{green!22}0.5777 \\
  Ditto~\cite{bai2025ditto}            & \cellcolor{orange!28}0.3084 & \cellcolor{green!22}0.8686 & \cellcolor{orange!28}0.3146 & \cellcolor{green!22}0.7760 \\
  UniEdit~\cite{bai2025uniedit}          & \cellcolor{green!22}0.5265 & \cellcolor{orange!28}0.4945 & \cellcolor{green!22}0.6812 & \cellcolor{orange!28}0.4219 \\
  AnyV2V~\cite{ku2024anyvv}           & \cellcolor{orange!28}0.3999 & \cellcolor{green!22}0.6388 & \cellcolor{green!22}0.7474 & \cellcolor{orange!28}0.4595 \\
  Ground-A-Video~\cite{jeong2024groundavideo}   & \cellcolor{gray!18}$-$0.1498 & \cellcolor{green!22}0.6103 & \cellcolor{gray!18}0.2673 & \cellcolor{gray!18}0.1795 \\
  Token-Flow~\cite{geyer2024tokenflow}       & \cellcolor{green!22}0.5358& \cellcolor{green!22}0.7119& \cellcolor{orange!28}0.3899 & \cellcolor{green!22}0.7405 \\
  ElasticTTT       & \cellcolor{green!22}0.5922 & \cellcolor{orange!28}0.4245 & \cellcolor{green!22}0.5907 & \cellcolor{green!22}0.5915 \\
\bottomrule
\end{tabular}
\end{table}

\begin{figure}[!htbp]
    \centering
    \includegraphics[width=1\linewidth]{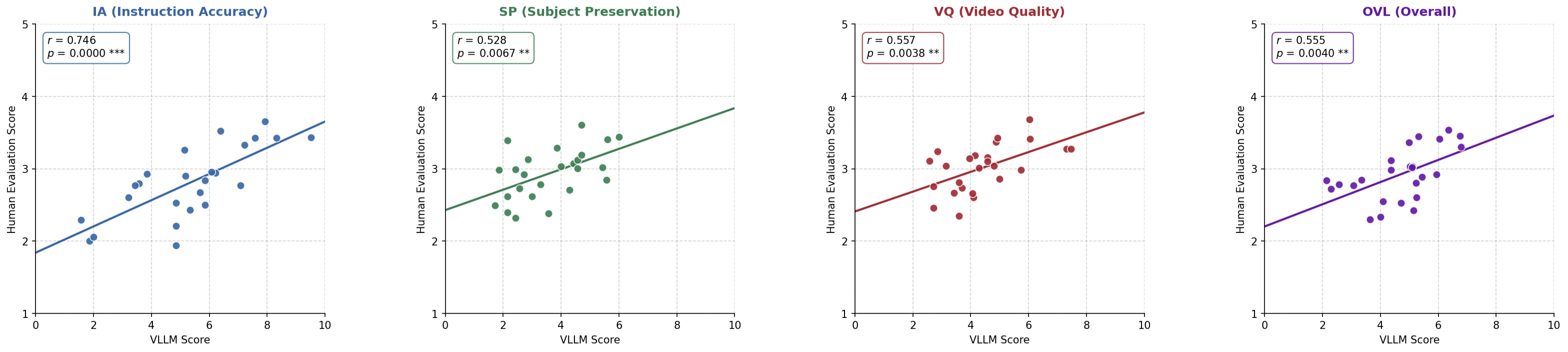}
   
    \caption{Correlation visualization between VLM score vs. Human Evaluation}
    \label{fig:human-vllm-pearon-comparison}
\end{figure}

To rigorously assess the agreement between our automated Vision-Language Model (VLM) evaluation and human perceptual judgments, we compute the Pearson correlation coefficient ($r$) across four core dimensions: Instruction Adherence (IA), Source Preservation (SP), Video Quality (VQ), and the Overall score (OVL). For a sample size of $n = 25$ video clips, the correlation coefficient between the VLM-assigned scores ($x_i$) and the human evaluation scores ($y_i$) is calculated as:
\begin{align}
    r = \frac{\sum_{i=1}^{n}(x_i - \bar{x})(y_i - \bar{y})}
             {\sqrt{\sum_{i=1}^{n}(x_i - \bar{x})^2} \cdot 
              \sqrt{\sum_{i=1}^{n}(y_i - \bar{y})^2}}
\end{align}
where $\bar{x}$ and $\bar{y}$ denote the respective sample means. This coefficient, bounded within $[-1, 1]$, quantifies the strength of the linear relationship between the two scoring paradigms.

To evaluate the statistical significance of these observed correlations, we formulate a two-sided hypothesis test. Let $\rho$ denote the true population correlation coefficient. We define our null hypothesis against the alternative hypothesis as follows:
\begin{align}
    H_0: \rho = 0 \quad \text{vs.} \quad H_1: \rho \neq 0
\end{align}
To assess the evidence against $H_0$, which posits that any observed correlation is strictly due to random sampling variance, we compute the $t$-statistic:
\begin{align}
    t = r \sqrt{\frac{n - 2}{1 - r^2}}
\end{align}
Under the null hypothesis, this test statistic follows a Student's $t$-distribution with degrees of freedom $df = n - 2$. For our evaluation sample, $df = 23$. Based on this $t$-statistic, we calculate the corresponding two-sided $p$-value, representing the probability of observing a correlation as extreme as the computed $r$ entirely by chance (assuming $H_0$ holds). A sufficiently small $p$-value provides strong evidence to reject $H_0$ in favor of $H_1$, indicating a statistically significant linear relationship. The specific $p-values$ and significant levels are demonstrated in~\cref{tab:p value}. The specific Pearson correlation coefficients for each of the seven evaluated models are reported in~\cref{tab:pearson_7models}, with categorization thresholds adopted from Cohen's established guidelines~\cite{1988Statistical}.

Furthermore, we visualize this alignment in~\cref{fig:human-vllm-pearon-comparison}. Each data point in the scatter plot represents an individual editing task, with its spatial coordinates corresponding to the mean VLM and human evaluation scores averaged across all seven models.
In the figure caption, $r$ denotes the correlation coefficient and $p$ denotes the corresponding $p$-value for statistical significance testing. An asterisk (*) indicates that the correlation is statistically significant, typically with $p < 0.05$. More asterisks (*) represent a higher significance.

Ultimately, these quantitative analyses reveal a strong, positive, and statistically significant linear correlation between VLM scoring and human evaluation in the vast majority of cases. This robust alignment demonstrates that our VLM-based metrics serve as a highly reliable and scalable proxy for human visual judgment.

\section{Prompt Design for VLM Judgment}
To ensure the reproducibility of the results, we list the prompt for GPT-5 evaluation below:
\label{prompt}

\begin{lstlisting}[language=json, caption={VLM prompt for Overall Score}]
"You will be given two frame sequences:
- Sequence A: original video frames
- Sequence B: edited video frames
You are scoring a video editing result from 0 to 10. 
Please assign scores more dispersedly based on the actual situation
in the video.

Scoring rubric:
1) Edit fulfillment (highest weight):
does edited video follow edited prompt vs original prompt?
2) Content preservation: 
keep unrelated content/identity/background when they should remain.
3) Temporal and visual consistency: 
stable motion, no obvious artifacts/flicker.

Return strict JSON only with keys:
{"score_0_10": number, "reason": string, "confidence": number}
Where score_0_10 in [0,10], confidence in [0,1], reason <= 80 words."
\end{lstlisting}
\begin{lstlisting}[caption={VLM prompt for Video Quality}]
"You will be given two frame sequences:
- Sequence A: original video frames
- Sequence B: edited video frames
You are scoring a video editing result from 0 to 10.
Please assign scores more dispersedly based
on the actual situation in the video.

"You are scoring a edited video quality from 0 to 10. 
Please assign scores more dispersedly based 
on the actual situation in the video."
        "Scoring rubric:"
        "1) image quality: 
        the image from the edited video is of high quality, 
        should be sharp and clear."
        "2) Aesthetic score: 
        the edited video is aesthetically pleasing, 
        the motion is smooth and natural."
        "3) Background consistency:
        the background is consistent with the original video,
        no obvious artifacts/flicker."
        "4) subject consistency: 
        the object is consistent with the original video, 
        no obvious artifacts/flicker."
        "5) motion smoothness: 
        the motion is smooth and natural,
        no obvious artifacts/flicker."
        "Return strict JSON only with keys:"
        '{"score_0_10": number, "reason": string, "confidence": number}'
        "Where score_0_10 in [0,10], confidence in [0,1], 
        reason <= 80 words."

Return strict JSON only with keys:
{"score_0_10": number, "reason": string, "confidence": number}
Where score_0_10 in [0,10], confidence in [0,1], 
reason <= 80 words."
\end{lstlisting}

\begin{lstlisting}[caption={VLM prompt for Video}]

"You will be given two frame sequences:
- Sequence A: original video frames
- Sequence B: edited video frames
You are scoring a video editing result from 0 to 10. 
Please assign scores more dispersedly based 
on the actual situation in the video.

"You are scoring a video editing result from 0 to 10. 
Please assign scores more dispersedly based
on the actual situation in the video."
        "Scoring rubric:"
        "1) Semantic alignment: 
        does edited video follow edited prompt vs original prompt?"

        "Return strict JSON only with keys:"
        '{"score_0_10": number, "reason": string, "confidence": number}'
        "Where score_0_10 in [0,10], confidence in [0,1],
        reason <= 80 words."
Return strict JSON only with keys:
{"score_0_10": number, "reason": string, "confidence": number}
Where score_0_10 in [0,10], confidence in [0,1], reason <= 80 words."
\end{lstlisting}

\begin{lstlisting}[caption={VLM prompt for Source Preservation}]
"You will be given two frame sequences:
- Sequence A: original video frames
- Sequence B: edited video frames
You are scoring a video editing result from 0 to 10. 
Please assign scores more dispersedly based
on the actual situation in the video.

"You are scoring a video editing result from 0 to 10. 
Please assign scores more dispersedly based 
on the actual situation in the video."
        "Scoring rubric:"
        "1) How much the details in the background
        (the place that should not be edited) are consistent 
        with the original video?"
        
        '{"score_0_10": number, "reason": string, "confidence": number}'
        "Where score_0_10 in [0,10], confidence in [0,1], 
        reason <= 80 words."
\end{lstlisting}
\section{Demonstration of Asynchronous Noise Scheduling Masks}
\begin{figure}[!htbp]
    \centering
\vspace{-1pt}    \includegraphics[width=1.02\linewidth]{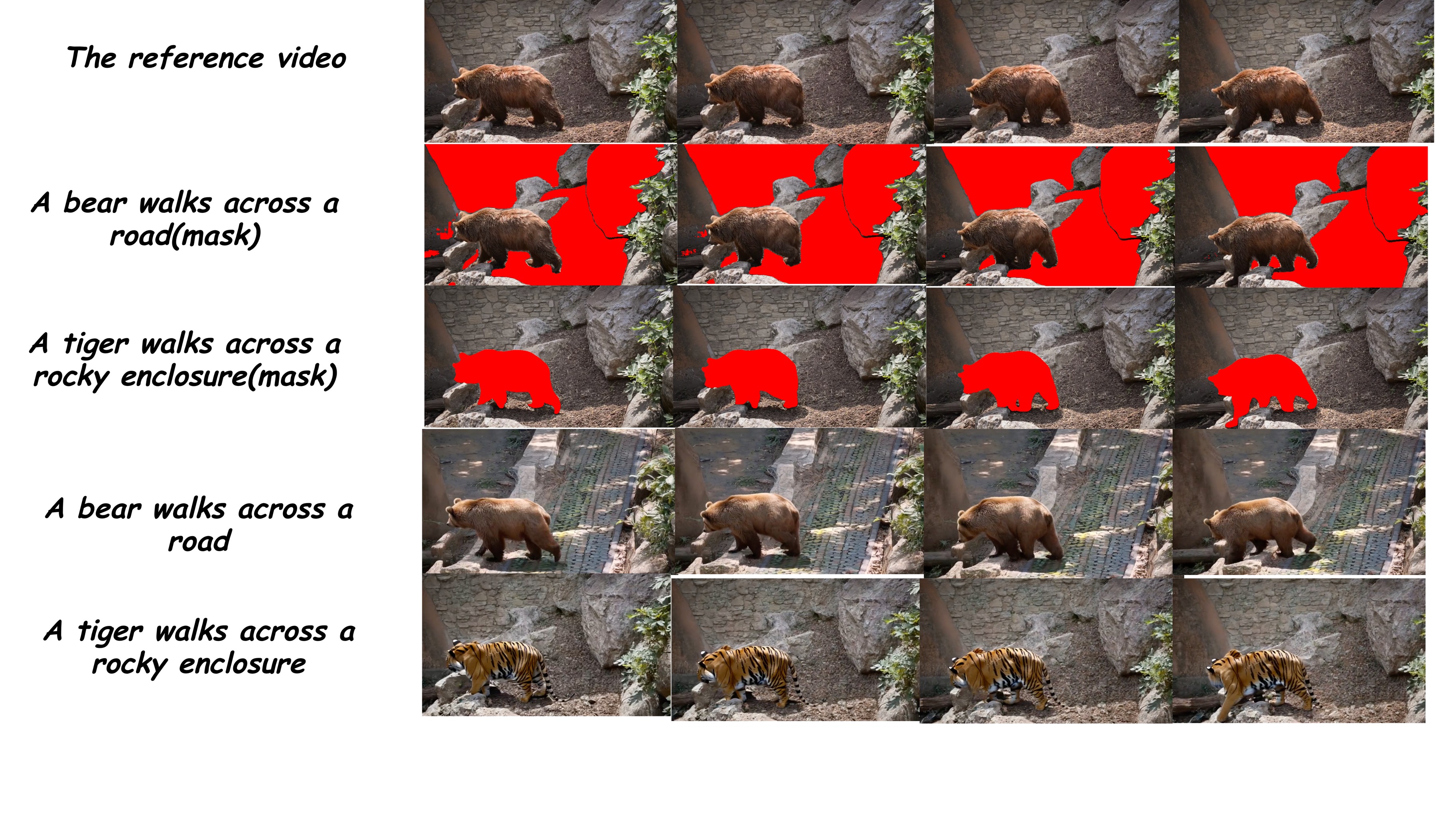}
    \caption{Visual illustration of our mask area and the final edited video.}
    \label{fig:mask}
 
\end{figure}
\label{async}
In~\cref{fig:mask}, we illustrate the mask used for Asynchronous Noise Scheduling (Async-NS) generated by Grounded-SAM2~\cite{ravi2024sam2}. 

We showcase two different editing scenarios of the same reference video to further demonstrate the relationship between editing prompt and mask area. The second and third row show the masked areas and the fourth and fifth row are our final editing results. 

\section{Limitations and Future Work}
\label{limit}
\subsection{Limitation}
Although our model can achieve state-of-the-art performance in all editing tasks, it still suffers from a few limitations. 
Firstly, the TTT nature of our model leads to more computation cost compared to those training-free methods.
Secondly, in some situation, we found that over-specification of a certain area in the video can also limit the editing capabilities of other areas. We infer that this phenomenon may be caused by the over-optimization of the self-attention layers during the Test-Time-Tuning process.

\subsection{Future Work}
Currently, our Asynchronous Noise Scheduling relies on explicit masks generated by external models like Grounded-SAM2. Future work could eliminate this dependency by dynamically extracting spatial masks directly from the diffusion model's internal cross-attention maps during the initial TTT steps, creating a fully end-to-end framework. 

Also, we aim to extend ElasticTTT to handle long-context or multi-shot videos. By introducing temporal chunking or sliding-window TTT, we can ensure temporal consistency across hundreds of frames without exceeding standard VRAM constraints.

\section{More Demonstrations}
\label{demo}
In~\cref{fig:placeholder34}, we further show two challenging cases with increased motion and complex postures.
In~\cref{fig:additional_comparisons}, we demonstrate two more cases comparing to other video editing methods.

\begin{figure}[!htbp]
    \centering
    \includegraphics[width=1.0\linewidth]{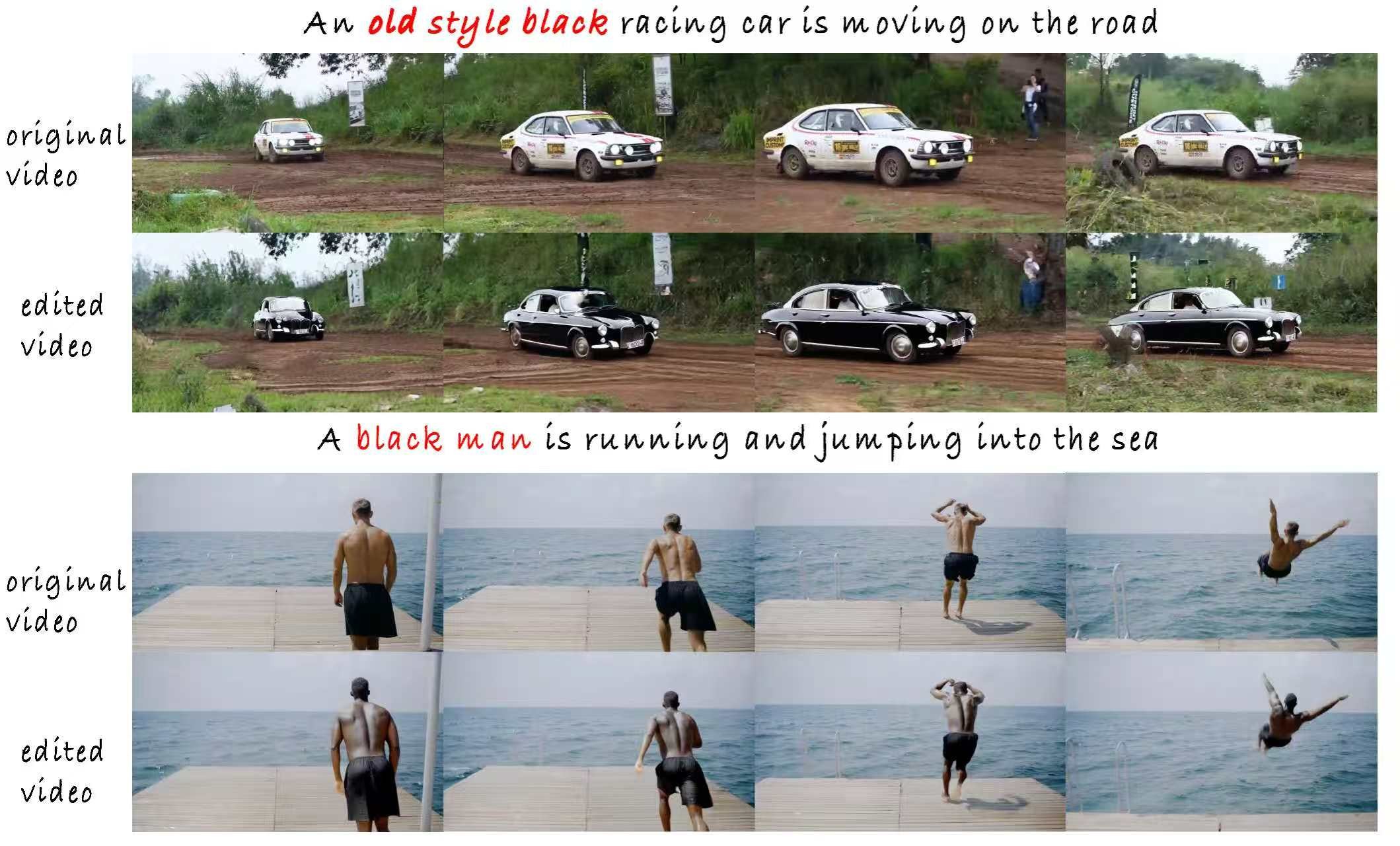}
    \caption{Two more challenging cases.}
    \label{fig:placeholder34}
\end{figure}

\newpage

\begin{figure}[!htbp]
    \centering
    \includegraphics[width=1\linewidth]{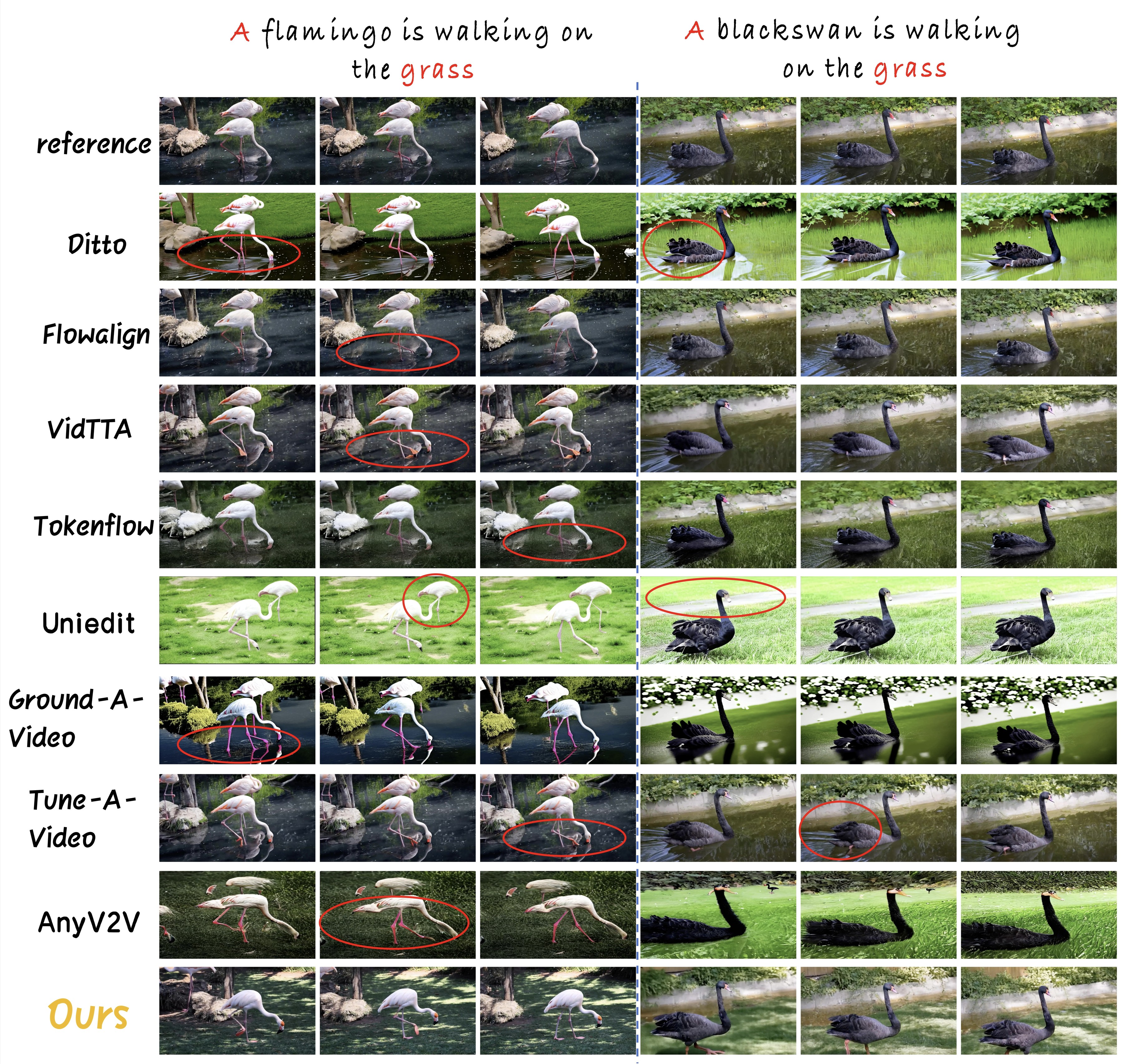}
    \caption{Visual comparisons with other methods. Our method strictly follows the editing instructions while successfully preserving the details in unedited regions.}
    \label{fig:additional_comparisons}
 
\end{figure}

\section{Social Impact and Ethical Concerns}
\label{social}
\subsection{Social Impact}
\subsubsection{Positive Social Impact}
ElasticTTT serves as an accessible and highly effective framework that democratizes video editing for a broad user base. By enabling intuitive modifications solely through natural language instructions, it eliminates the steep learning curve traditionally associated with professional-grade software such as DaVinci Resolve. Furthermore, our model offers substantial value to professional creators by automating routine editing tasks, thereby significantly accelerating workflows and reducing both temporal and financial production costs.
\subsubsection{Potential Risk}
Although our proposed ElasticTTT democratizes high-quality video editing, it inherently carries risks related to malicious video synthesis. The ability to precisely edit specific regions of a video could be misused to create convincing counterfeit content, such as placing individuals in fabricated scenarios without their consent. This contributes to the broader societal challenge of deepfakes and misinformation propagation. We strictly condemn the use of our method for generating deceptive content. To safeguard against such misuse, we suggest that downstream applications of our framework incorporate invisible watermarking and pair the technology with advanced synthetic media detection algorithms.

\subsection{Ethical Concerns for Human Evaluation}
Our human evaluation framework is designed in strict compliance with ethical research standards. First, all evaluators engaged in the assessment strictly on a voluntary basis, retaining the unconditional right to opt out at any stage without facing any negative repercussions. Second, we enforced a rigorous data privacy policy; all collected feedback was entirely anonymized, ensuring that no personally identifiable details were ever recorded. Furthermore, the study is purely observational. The core task merely requires human subjects to review standard synthesized videos, which completely shields them from physical hazards, psychological stress, or any offensive visual stimuli. Consequently, our methodology avoids any invasive interventions that might compromise participant well-being.

\subsection{Safeguard}
It is quite essential to implement proper safeguards when developing user-friendly video editing software or platform. To ensure a thoroughly vetted and secure assessment environment for our human evaluation, we strictly sampled videos from our curated dataset, which is constructed based on the renowned DAVIS benchmark. This deliberate selection process guarantees that all visual content is benign, thereby actively preventing the inadvertent inclusion of inappropriate, sensitive, or unvetted media commonly found on the internet.

\newpage
% \bibliographystyle already set in main file, do not override
% \bibliographystyle{splncs04}
% \bibliography main already called in main file

\end{document}